\documentclass{article}

\usepackage[T1]{fontenc}
\usepackage[utf8]{inputenc}
\usepackage{lmodern}
\usepackage{microtype}

\usepackage{amsmath}
\usepackage{amssymb}
\usepackage{amsthm}
\usepackage{mathtools}
\usepackage{enumerate}
\usepackage[dvipsnames]{xcolor}

\usepackage{multirow}
\usepackage{empheq}
\usepackage{listings}
\usepackage{color}

\usepackage[square,comma,numbers]{natbib}

\usepackage{hyperref}

\numberwithin{equation}{section}
\pagenumbering{arabic}

\def\reff#1{{\rm(\ref{#1})}}

\def\be\begin{equation}
\def\ee\end{equation}

\def\bea\begin{eqnarray}
\def\eea\end{eqnarray}

\def\bea*\begin{eqnarray*}
\def\eea*\end{eqnarray*}

\def\cA{{\mathcal A}}
\def\cB{{\mathcal B}}
\def\cC{{\mathcal C}}

\def\cF{{\mathcal F}}
\def\cG{{\mathcal G}}

\def\cK{{\mathcal K}}
\def\cL{{\mathcal L}}

\def\cN{{\mathcal N}}
\def\cO{{\mathcal O}}

\def\cS{{\mathcal S}}

\def\cT{{\mathcal T}}

\def\cX{{\mathcal X}}
\def\cY{{\mathcal Y}}
\def\cZ{{\mathcal Z}}

\def\E{\mathbb{E}}
\def\F{\mathbb{F}}

\def\N{\mathbb{N}}

\def\P{\mathbb{P}}
\def\R{\mathbb{R}}

\def\eps{\epsilon}

\def\reff#1{{\rm(\ref{#1})}}
\def\Om{\Omega}

\def\oell{\overline{\ell}}
\newcommand{\bigO}{O}

\theoremstyle{plain}
\newtheorem{theorem}{Theorem}[section]
\newtheorem{lemma}[theorem]{Lemma}
\newtheorem{corollary}[theorem]{Corollary}

\newtheorem{assumption}[theorem]{Assumption}
\newtheorem{definition}[theorem]{Definition}
\newtheorem{example}[theorem]{Example}
\newtheorem{remark}[theorem]{Remark}


\def\Xa{X^a}

\def\xa{x^a}

\def\Zi{Z^{(i)}}

\def\Xi{X^{(i)}}

\def\ase{\alpha^{\eps}}

\def\d{\mathrm{d}}

\newcommand{\ceq}{\mathrm{ce}}

%
%

\usepackage{fancyhdr}
\pagestyle{fancy}
\lhead{Reppen \& Soner}
\rhead{Deep ERM }

\title{
Deep Empirical Risk Minimization in finance:\\
looking into the future}

\author{A.\ Max Reppen\footnote{Questrom School of Business, Boston University, Boston, MA, 02215, USA, email: {\texttt{amreppen@bu.edu}}.
Reppen was partly supported by the Swiss National Science Foundation grant SNF 181815.}
\and H. Mete Soner\footnote{Department of Operations Research and Financial
Engineering, Princeton University, Princeton, NJ, 08540, USA, email: 
{\texttt{ soner@princeton.edu}}. Research of Soner
 was partially supported by the National Science Foundation grant
 DMS 2106462.}}

\date{\today}

\textwidth 5.8in
\textheight 8.in
\oddsidemargin 0.1in
\evensidemargin -.3in
\parindent 0.4in

\begin{document}

\maketitle

\begin{abstract}
\noindent
Many modern computational approaches to 
classical problems in quantitative finance
are formulated
as empirical loss minimization (ERM), 
allowing direct applications of classical results 
from statistical machine learning.
These methods, designed to
directly construct the optimal feedback representation of
hedging or investment decisions,
are analyzed in this framework demonstrating
their effectiveness  as well as their
 susceptibility to generalization error.
 Use of classical techniques 
 shows that over-training renders
 trained investment decisions to become anticipative,
 and proves overlearning for large hypothesis spaces.
On the other hand, non-asymptotic estimates based on Rademacher complexity
show the convergence 
for sufficiently large training sets.
 These results emphasize the importance
 of synthetic data generation and 
 the appropriate calibration of complex models
 to market data.
 A numerically studied
stylized example illustrates these possibilities, including the importance of
problem dimension in the degree of overlearning,
and the  effectiveness of this approach.

 \end{abstract}

\noindent\textbf{Key words:} Deep learning, ERM,
Overlearning, Dynamic hedging, Bias-variance trade-off.
\smallskip\newline
\noindent\textbf{Mathematics Subject Classification:} 91G60, 49N35, 65C05.

\section{Introduction}

Recent advances in the training of neural networks make
high-dimensional numerical studies  feasible for 
decisions under uncertainty, and in particular,
classical hedging and pricing problems in quantitative 
finance.  Although simulation-based methods
have been 
widely used in stochastic  (optimal) control for several decades 
\citep{BT},
only recently  \cite{HE},
 \cite{HEJ}  combine it with deep neural networks 
for the offline construction of 
optimal feedback actions
for sequential decision problems.
This approach,
which we call (\emph{dynamic}) \emph{deep empirical risk minimization} (ERM), 
assumes that a training set 
is either readily available or can be simulated through an assumed model,
and then an appropriate empirical average over this  training data is used
to construct a loss function
to be minimized over the 
network parameters.
A near-minimizer 
is the trained
network that approximates the
optimal investment actions.   
This technique is remarkably flexible and  tractable. It
can handle complex realistic
dynamics with ease, 
does not require a Markov structure,
and can be completely data-driven  
when a sufficiently large  training set is available.

Readily this method has been adopted in many
studies \cite{BHLP,BCJ,ETH,BGTW,BGTWM,GMS,HL,HPBL,RW}
to study various problems in quantitative finance and we discuss
them later in this introduction.  
We also refer the readers to the recent excellent surveys \cite{FMW,HPW,RWs}
and the references therein for more information.  
Our goal is to provide both a computational and a
theoretical assessment of this promising 
new methodology.  We use the setting of general stochastic  control
for optimal sequential investments
to study them  in a unified manner.

As always, the generalization of the trained network
is the key property that is of paramount importance.
We study this central question 
in a general setting with a fixed training set
to also
explore the feasibility
of purely data-driven implementations.
Our key observation is 
to carefully formulate the general
stochastic optimal control  problem by expressing
the pathwise cost of a feedback policy
as a function
of the randomness whose expected value
is  the performance of this strategy, so that
the loss function used in the training is 
its empirical average. A central structural assumption
this approach makes is that, 
the reward function for all actions 
can be computed once a
trajectory of the randomness is given. This 
structure which is 
pervasive in quantitative finance,
 is further elaborated in Remark \ref{r.structure}
below.
Thus,  these recent
approaches 
can be viewed as empirical risk minimization, enabling us
to directly apply classical  results from statistical machine learning
with ease
and  provide useful estimates 
and tools for their analysis.
The explicit dependence on the randomness 
also clarifies how overfitting causes 
non-adaptedness on the training data.
To  articulate this maybe not so well-known but potentially
conducive reformulation with clarity,
we avoid technical constructs and assumptions, 
and focus on 
emphasizing
the fundamental structures and connections.

It is well-known that optimal feedback controls or
investment strategies are determined by
the conditional expectations of the 
value function evaluated at  future controlled
random states  \citep{FS},
and the above method essentially uses a regression
estimate  of these  expectations.
It is therefore natural that it
implicitly faces the classical bias-variance
trade-off as articulated in  \cite{GBD,JWHT,LG}.
Indeed, optimal decisions 
depend on a time-varying estimate of the randomness
driving the dynamics of the state, which is 
basically the return process in finance,
and in some applications the available
training data is limited in size.
Hence, as opposed to interesting
recent studies \cite{BHMM,zhang}
arguing the benefits of more complex networks and interpolation,
 in this dynamic setting,
overfitting causes the loss of the most salient
restriction of the problem,  namely, adaptedness 
of decisions to the information flow.  

The above numerical procedure that we outline in Section \ref{s.algo},
uses a hypothesis space $\cN_k$ of feedback actions
and minimizes the empirical loss function in this space.
Therefore, the training is among adapted
processes,
as feedback controls use only the current information.
Still,
the global minima of the empirical loss functions
are achieved by actions whose coefficients
depend on the whole random path including the future,
as proved in Theorem \ref{t.ant}.
Therefore, as shown in Section~\ref{s.overlearning},
sufficiently large networks  may at every time
\emph{overlearn} the future randomness instead of estimating them,
generating
output investment decisions that anticipate the future. 
This renders the trained feedback actions  \emph{on the training set}
to be {\emph{non-adapted}} to the filtration
generated by the observable variables, 
and thus in-sample, they overperform the original control problem
 by implicitly circumventing the
essential restriction of the adaptedness of the decisions.
Consequently, feedback actions constructed by
 large enough {hypothesis spaces}  do not always
generalize and perform poorly out-of-sample.
Examples of Section~\ref{ss.examples}
clearly illustrate the concept of overlearning
and the consequent  non-adaptedness
of  the actions  in non-technical settings.
Theorem \ref{t.ant} considers the over-parametrized limit 
of the trained actions or,
equivalently, the limit as the hypothesis 
spaces gets denser.
It is shown that in the limit
the performance of naturally adapted trained feedback controls
is equal  to the performance
of the strictly smaller value
given by the minimization over the anticipative controls.
\newpage

In data-driven applications, 
this capability of the artificial networks to overlearn,
necessitates 
effective data enrichment when 
the available set is not sufficiently large.
As these methods are data-hungry,
this remark applies to a large class of problems 
coming from quantitative finance.
The recent survey \cite{A}
provides a thorough introduction
to this important question and outlines numerous
approaches from statistics.  
A related problem is the calibration
of complex models to market data.
Modern optimization tools can also be utilized in this context allowing 
 us to use
more complex models and even artificial neural
networks to achieve this goal.
Currently there is extensive and far-reaching research 
on this topic.  An example is the exciting recent
paper \cite{C} that use this approach to calibrate
a local volatility model to market data.
For further information we refer  the reader to
the references in \cite{C}.  Regularization
is also widely used to reduce over-learning
as we discuss further in Remark \ref{r.regular},
below.  Finally,
even with these, 
when overlearning renders the data need computationally infeasible,
one must use alternate methods such as the 
dynamic programming based \emph{Deep Galerkin} \cite{SS} method
widely used for Markovian models.

Despite this potential hurdle of overlearning, 
many papers that are closely related to
dynamic deep empirical risk minimization
report impressive numerical results in problems 
with a large number of states.
Central financial problems of hedging and portfolio management are
the  foci of the pioneering papers \cite{BGTW,BGTWM}.
These studies
demonstrate that complex and high-dimensional
stock dynamics,
and also market details such as transaction costs and 
market impact are within easy reach of this approach. 
In particular, they study multidimensional factor  models, 
delivering convincing evidence for the flexibility and the scope of the algorithm, 
particularly in high-dimensions.
\cite{BHLP,HPBL} develop several effective
algorithms including the one discussed
here as well as hybrid ones that use the 
Markovian structure together with dynamic programming.
The convergence analysis of \cite{BHLP}
complements the 
high-dimensional numerical experiments of \cite{HPBL}.
The recent paper \cite{Warin} provides 
an extensive and informative numerical analysis of the impact of 
the network architecture.
Another exciting series of papers 
\cite{BCJ,ETH} consider the difficult problem of optimal
stopping.  These papers solve numerous examples of practical interest
 in dimensions up to 100 and 
show that dynamic deep empirical risk minimization
yields feedback actions achieving values very close to 
the upper bounds  computable through their duals.
The computation of the free boundary is studied in
\cite{DeepStopping}. 
For further studies and more information, we refer the reader to the excellent 
survey papers \cite{FMW,HPW,RWs} and the references therein.

The expediency of our formulation reveals itself by enabling a plethora 
of techniques from statistical machine learning  \citep{SSB}, as evidenced by 
the non-asymptotic 
upper bounds proved in Section \ref{s.estimate} via empirical 
 Rademacher complexity \citep{BBL4,BBL3,KP}. 
In particular, Theorem \ref{t.rademacher} and the performance error  
estimate \eqref{e.diff} are analytic  manifestations of the bias-variance, 
or more precisely,  bias-complexity  trade-off (cf.~\cite{SSB}, Chp.~5)
in this context. 
 Indeed, for fixed training sets the complexity increases with 
 the size of the networks allowing for possible generalization error. 
On the other hand,
as the complexity of a fixed network
gets smaller with larger training data, for an
appropriate combination 
of network structure and data,
so does overlearning.
Specifically,
Corollary \ref{c.convergence}
shows that for sufficiently large training sets, 
actions constructed by 
appropriately wide or deep neural networks
are close to the desired solutions and
overlearning is negligible. 
Hence, through classical concepts,
we obtain  efficient estimates 
yielding structured convergence proofs.
The style of these results complements the
comprehensive analysis of  
\cite{HPBL} 
 for controlled Markov processes
and the convergence analysis
of dynamic deep empirical risk minimization
carried out in \cite{HLo}  for
backward stochastic differential equations.

Our numerical experiments{\footnote{These
experiments were carried out on personal computers.  The code and the logs, 
including random number generator seeds, 
are available at 
\url{https://gitlab.com/mreppen/dderm}
for their full reproduction.}} 
support these
theoretical observations as well.
In Section \ref{s.numerics}, 
we analyze a stylized Merton
utility maximization problem of  Example \ref{ex.utility}.
Like the previous papers, our results also  demonstrate the 
effectiveness of dynamic deep empirical risk minimization
in handling high-dimensional problems 
and the convergence of the algorithm.
Potential overlearning 
and its clear dependence  on the dimension of the 
randomness driving the dynamics
is shown
by comparing the in-sample and out-of-sample
performance of the 
trained networks.
Although we employ
early stopping based on out-of-sample performance,
there is  
always some amount of
overlearning whose level  strongly depends on the state dimension.
Our experiments with a training set of size  $100,000$
and three  hidden layers of width $10$
show  in-sample to out-of-sample performance differences
ranging from 1.5\% in $10$ dimensions
to 24\% in $100$ dimensions.
This dependence 
is further corroborated by experiments controlling for the number of network parameters.
Moreover, more aggressive minimization---beyond our conservative early stopping---results in substantial overlearning. 
In 100 dimensions,
the trained network soon reaches 30\% over-performance
over the known true solution 
in 100--200 epochs, and more would be possible with further iterations.
In these cases, the out-of-sample performance deteriorates
rapidly.

It is well documented 
in the literature and proven by our estimates
that the size of the training
data is central to the performance of this approach.  
In 100 dimensions, we achieve a 
remarkable improvement in the accuracy
of our numerical computations by increasing the
size of training data.
In studies in which data is simulated
from a model, one does not create an initial training set,
but rather simulate new data for each batch, essentially
creating a large enough training set to obtain accurate results.
Therefore, as discussed earlier, simulation ability that is consistent
with the market data is key for this method.

Structurally we assume  that the random
process driving the state is uncontrolled, which is the case when
it is given by the stock returns.
Although most control problems are 
formulated differently, if their dynamics is known,
with little effort many can reformulated to have this structure.
This is demonstrated for  time-discretized
controlled
diffusions in Section~\ref{ss.feedback}
and for all  Markov decision processes
in Appendix \ref{appendixB}.

The paper is organized as follows.  The  problem is
defined in Section~\ref{s.problem}
and reformulated  in Section~\ref{s.rp}.
Section~\ref{s.algo} describes dynamic 
the deep empirical risk minimization.
Two motivating
examples are given in Section~\ref{s.examples}
and overlearning is introduced and proved in Section~\ref{s.overlearning}.
Error estimates based on Rademacher complexity 
and convergence are proved in Section~\ref{s.estimate}.
Section~\ref{s.numerics} outlines the specifics of the network
structure, the optimization algorithm, and the experiments.
After concluding remarks,  Appendix \ref{appendixA} provides
a generalization of the overlearning theorem
{and Appendix \ref{appendixB} formulates the classical Markov decision problem 
in our framework}.
\vskip 0.2in

\section{Notation and Conventions}
\label{ss.notation}
This is a brief summary of our notation and conventions
for quick reference.
Precise definitions are given in subsequent sections.

Whenever possible, we use capital letters for random variables
(with exceptions for the time of maturity $T$ and the utility function $U$),
lower case letters for deterministic quantities,
and sets are denoted by calligraphic letters.
In particular, action space $\cA$, state space $\cX$, and 
perturbation space $\cZ$  are closed subsets of Euclidean spaces
with the usual Euclidean norm.
We assume all functions defined on these sets to be continuous. 
For a set $\cY$ and a positive integer $t$,
$\cY^t$ is the Cartesian product of $t$ copies of $\cY$.

\emph{Maturity} is a finite positive integer $T$.
A stochastic process $Y$ taking values in $\cY$ is a 
finite sequence of $\cY$-valued random variables $(Y_0, Y_1, \dots, Y_T)$.
A \emph{trajectory} represents one realization of this process and
is a deterministic sequence
$y \in \cY^{T+1}$.
We use parenthesized subscripts to denote the sequence up to a time $t$:
$$
y_{(t)}:=(y_0,\ldots,y_t) \in \cY^{t+1}\quad
\text{or}
\quad
Y_{(t)}:=(Y_0,\ldots,Y_t).
$$

{A  \emph{feedback action} is a Borel-measurable map 
$a: \cT \times \cX \times \cZ \to \cA$, and for such $a$,}
the corresponding controlled state is denoted by $X^a\in\cX^{T+1}$,
with initial condition 
$X^a_0=x\in\cX$.
The  initial condition $x$ is considered fixed,
and is included in $X^a_{(t)}=(x,X^a_1,\dots,X^a_t)\in\cX^{t+1}$, 
but  otherwise it is omitted in the notation.
The set of all bounded, continuous feedback actions is denoted by $\cC$.
The set $\cB$ of all Borel measurable functions $g :\cZ^T \to \cA^T$ is
related to \emph{anticipative controls}.

\section{Optimal Feedback Controls}
\label{s.problem} 
All investments problems we consider
can be formulated as
sequential 
decision problems under uncertainty,
or, equivalently, stochastic optimal control
problems in discrete-time with a finite horizon of $T$.
Thus, we study this more general problem
in which
actions are taken at  time points in the set
$$
\cT:=  \{0,1,\ldots,T-1\}.
$$
We assume that a stochastic process $Z$,
which is the stock returns in most applications,
drives the dynamics 
of the problem.  Each component $Z_t$  is a random variable on a probability space $\Om$
taking values in $\cZ$, a closed subset of a Euclidean space.  
We always set $Z_0=0$  and for 
$t>0$, with abuse of notation, we write
$$
Z_{(t)}:=(Z_1,\ldots,Z_t), \quad
\text{and}
\quad
Z:=(Z_1,\ldots,Z_T).
$$ 
Let $\F=(\cF_t)_{t=0,1,\ldots,T}$ be the
filtration generated by the process $Z$, i.e., for {$t=1,\ldots, T$,}
 $\cF_t$ is the smallest sigma-algebra
 so that 
 $Z_{(t)}  :  (\Om,\cF_t) \to \cZ^{t}$ 
 is Borel measurable, and $\cF_0=\{\emptyset,\Om\}$ is trivial.
 
 \subsection{Dynamics and Performance}
\label{ss.dyn}
At times $ t \in \cT$, investors 
choose an action $A_t$ with values in $\cA$,
a closed subset of a Euclidean space.
The control process $A=(A_0,\ldots,A_{T-1})$ is adapted
to the filtration $\F$, and the resulting state process denoted by $X^A$
takes values in the \emph{state-space} $\cX$, 
again a closed subset of a Euclidean space.
In the applications, the mark-to-market value
of the portfolio is always included in the state
as well as other relevant quantities depending 
on the modeling.
Given an initial condition $X^A_0=x \in \cX$, 
the controlled state $X^A$ solves
the simple random difference equation
$$
X^A_{t+1} =F_{t+1}(X^A_{(t)},A_t), \qquad  t \in \cT,
$$
where  $X^A_{(t)}=(X^A_0,\ldots,X^A_t)$,
and  $F_{t+1} : \Om \times \cX^{t+1} \times \cA \mapsto \cX$ 
is  $\cF_{t+1}$-measurable (as customary, we 
use Borel
subsets of the Euclidean spaces).
 The performance of the action $A$ is measured by,
\begin{equation}\label{e.vhat}
v(A):= \E[ \Phi(X^A,A) ],
\end{equation}
where {$\Phi : \Om \times \cX^{T} \times \cA^T \mapsto \R$
is $\cF_T$-measurable.}
The  optimization problem is to minimize the above performance function over 
a class of feedback actions discussed in the next section.

As $\cF_t$ is generated by 
$Z_{(t)}$, the dependence of any $\cF_t$ measurable random variable 
on the randomness is 
given entirely through $Z_{(t)}$.
Hence, there are functions,
$$
f(t,\cdot): \cX^{t+1} \times \cA  \times \cZ^{t+1} \to \cX,
\quad
\text{and}
\quad
\varphi: \cX^{T+1} \times  \cA^T \times \cZ^T \to \R,
$$
so that
$$
F_{t+1}(X^A_{(t)},A_t)=f(t,X^A_{(t)},A_t,Z_{(t+1)}),
\quad 
\text{and}
\quad
 \Phi(X^A,A)=\varphi(X^A,A,Z).
 $$
 
\subsection{Admissible Actions}
 \label{ss.feedback}
We  restrict the investment decisions to be a
 function of the current state, hence, a \emph{feedback action}.  That is,
 the investors determine their actions through  a
bounded, continuous function of their choice
$$
a: \cT \times \cX \times \cZ \to \cA.
$$
Indeed, for a chosen
feedback function $a$, one first
recursively defines a process $X^a$ by the equations,
\begin{equation}
\label{e.state}
X^a_0=x,\quad
\text{and}
\quad
X^a_{t+1} =f(t,X^a_{(t)},a(t,X^a_t,Z_t),Z_{(t+1)}), \qquad  t \in \cT.
\end{equation}
Then, the corresponding  control process is given by $A^a_t:=a(t,X^a_t,Z_t)$.
It is clear that $A^a$ is adapted to $\F$
and the process $X^a$ is equal to the state process $X^{A^a}$
given by the control process $A^a$.  Hence, $A^a_t=a(t,X^{A^a}_t,Z_t)$.
Let $\cC$ be the set of all bounded, continuous {\emph{feedback actions}}
$a: \cT \times \cX \times \cZ \to \cA$.
In our notation, we use $A^a$ and $a$ interchangeably and write
$$
 v(a):=v(A^a)=
 \E[ \varphi(X^a,A^a,Z) ].
$$
Feedback actions
are easily implementable and are therefore widely used in 
practice.

\subsection{Problem}
\label{ss.dp}

The stochastic optimal control problem in discrete time---or an investment problem---is to
 \begin{equation}
 \label{e.prob}
 \text{minimize}\ a \in \cC \ \mapsto \ 
 v(a)= \E[ \varphi(X^a,A^a,Z) ],
\end{equation}
where $X^a$ is the solution of \reff{e.state}
and the expectation is over the distribution of $Z$.
We assume that $f(t,\cdot): \cX^{t+1} \times \cA  \times \cZ^{t+1} \to \cX$
determining the dynamics in \reff{e.state} and
the cost function 
$\varphi: \cX^T \times \cA^T \times  \cZ^T \to \R$  
in \reff{e.prob}
are  given and known. 

\begin{remark}
\label{r.markov}
{\rm{For Markovian models, the restriction to feedback controls 
causes no loss of generality.
Indeed, let $\cA_{ad}$ be
the set of all adapted and bounded processes $A\in \cA^T$, and
suppose that 
$Z$ is a Markov process, $f(t,\cdot)$  depends 
only on  $Z_{t+1}$ and not on $Z_{(t+1)}$,
and  $\varphi$ is given by
$$
\varphi(X^A,A,Z)= \hat{\varphi}(X^A_T)+
\sum_{t \in \cT}\ \psi(t,X^A_t,A_t,Z_t),
$$
for some {given} functions $\psi$ and $\hat{\varphi}$.
Then, under reasonable assumptions, one can 
show that among all
adapted processes there are near-maximizers that are of feedback form.
}}\end{remark}

\begin{remark}
\label{r.classical}
{\rm{The above model can also be 
obtained as an appropriate discretization of 
finite-horizon, continuous-time
problems. As an example,  consider
the classical optimal control of diffusion processes
with  a finite horizon $T_0$
with the admissible controls $\widehat{\cA}_{ad}$
given as the set of all adapted 
and bounded processes $A:[0,T_0] \to \cA$.
Then, the problem is to
$$
\text{minimize}\ A \in \widehat{\cA}_{ad} \mapsto \ v(A):=
\E \bigg[ \int_0^{T_0} \tilde{\psi}(u,\widehat{X}^{A}_u,A_u)\d u + 
\tilde{\varphi}(\widehat{X}^{A}_{T_0})
\bigg],
$$
subject to  dynamics
$$
d\widehat{X}^{A}_u = \mu(t,\widehat{X}^{A}_u,A_u)  \d u 
+ \sigma(u,\widehat{X}^{A}_u,A_u)\d W_u,
$$
where $W$ is a Brownian motion and $\tilde{\psi}, \tilde{\varphi}$ are functions
independent of randomness.
Euler--Maruyama discretization of this model is a
discrete-time decision problem with
$$
\varphi(X^A,A,Z)= \sum_{t \in \cT}\ \tilde{\psi}(t\Delta t,X^A_t,A_t) \Delta t
+ \tilde{\varphi}(X^A_T),
$$
$$
f(t,X^A_{(t)},A_t,Z_{(t+1)})=X^A_t+ \mu(t\Delta t,X^A_t,A_t) \Delta t
+ \sigma(t\Delta t,X^A_t,A_t) Z_{t+1},
$$
where $A_t=A_{t \Delta t}$, $Z_{t+1}= \Delta W_{t \Delta t}:= W_{(t+1)\Delta t} -W_{t \Delta t}$.
If the original function $\mu$, $\sigma$, $\tilde{\psi}$, $\tilde{\varphi}$
also depend on the randomness in an adapted manner, this would
introduce dependencies of $\varphi$ and $f$ on the past 
Brownian increments as well.
We refer to the classical text book \cite{DK} for 
related results.
}}\end{remark}

Classical Markov decision processes are discussed in Appendix \ref{appendixB}.

\begin{remark}
\label{r.structure}
{\rm{
One important but subtle property of this problem is that, given data, any policy can be evaluated without the need for further data collection or interaction with the system.
For instance, a small investor does not impact stock dynamics when trading, and can therefore observe the returns and reason ex post about what would have happened under other trading strategies.
This is in stark contrast to many engineering applications where controls alter the physical trajectories, and a change in the control requires a new observation or simulation.
This structure, which is pervasive in financial literature, is what allows direct optimization of the strategy (which we formulate as empirical risk minimization in this paper) in lieu of more general reinforcement learning methods.
However, the optimization used in the algorithm requires
large training sets necessitating the
construction of data-driven market models for simulations.
In this exciting new area of research, reinforcement learning 
may play a central role.
}}
\end{remark}

\section{Reformulation}
\label{s.rp}
{In this section, we
{provide a reformulation that}
enables us to write it as a problem of empirical risk minimization in Section~\ref{s.algo}.}
For a feedback action
$a \in \cC$,  an
initial value $x \in \cX$, and  a
(deterministic) trajectory 
$z=(z_1,z_2,\ldots,z_T)$
with $z_0=0$,  we define the
controlled state values 
$\xa=\left(\xa_0,\xa_1,\ldots, \xa_T\right)$
recursively by the equations
$$
\xa_0=x, \quad
\text{and}
\quad
\xa_{t+1}=f(t,\xa_{(t)}, a(t,\xa_t,z_t),z_{(t+1)}),\quad t \in \cT,
$$
where $f$ is as in \reff{e.state},
$\xa_{(t)}=(\xa_0,\ldots,\xa_t)$, $z_{(t+1)}=(z_1,\ldots,z_{t+1})$. The above solution, denoted by 
$\xa(z)=(\xa_1(z),\ldots,\xa_T(z))$,
 is a function of 
the trajectory $z$ and is called the \emph{state function}. Then, for each
process $Z\in \cZ^T$, the 
unique solution of \reff{e.state} is given by 
 $\Xa=\xa(Z)$. 
 Further, let $\alpha^a(z):=(\alpha^a_t(z),\ldots,\alpha^a_{T-1}(z))$ be given 
 by, $\alpha^a_t(z):= a(t,x^a(z),z_t)$ for $t \in \cT$ so that $A^a=\alpha^a(Z)$.
 
 Set
 \begin{equation}
\label{eq.ell}
\ell (a,z):=
\varphi(\xa(z),\alpha^a(z),z),\qquad
a \in \cC, \ z \in \cZ^T,
\end{equation}
where $\varphi$ is as in \reff{e.prob}.  As
$\Xa=\xa(Z)$ and $A^a=\alpha^a(Z)$, the performance function $v(a)$ of \reff{e.prob} is equal to 
$\E[ \ell(a,Z) ]$. Hence, the dynamic decision problem \reff{e.prob}  is 
equivalent to
\begin{equation}\label{e.probl}
 \text{minimize}\ a \in \cC \ \mapsto \ v(a)=\E[ \ell(a,Z) ].
\end{equation}
Its \emph{optimal value} is given by
\begin{equation}
\label{e.vstar}
v^* := \inf_{a \in \cC} v(a)= \inf_{a \in \cC} \E[ \ell(a,Z) ].
\end{equation}
Precisely this 
structure leads to empirical risk minimization and is {quite conducive} to analysis.
 
\subsection{Adapted and Anticipative controls}
\label{r.anticipative} 
This reformulation of the decision problem is not restricted
to feedback actions.  Indeed, let $\cA_{nt}$
 be the set of all $\cA^T$-valued random vectors that are
 $\cF_T$ measurable.  Elements of $\cA_{nt}$ can be parametrized 
 by the set $\cB$ of all Borel measurable functions
 $g=(g_0,\ldots,g_{T-1}) : \cZ^T \to \cA^T$:
 \begin{equation}
 \label{e.an}
 \cA_{nt}=\{ A^{g}\  : \ g\in \cB\},
 \qquad
 \text{where}
 \qquad
A^g_t = g_t(Z), \ \ t \in \cT .
 \end{equation}
Proceeding as above, we construct a state function $\hat{x}^{g}:
\cZ^T \to \cX^{T+1}$ so
that $X^{A^{g}}=\hat{x}^{g}(Z)$
and define $\alpha^g$ similarly. Hence,
the cost function $\varphi(X^{A^g}(Z),\alpha^g(Z),Z)$
is a function of $g$ and  $Z$.
Set
$$
\ell(g,z):= \varphi(\hat{x}^{g}(z),\alpha^g(z),z),\qquad
g\in \cB, \ z \in \cZ^T.
$$
with which another empirical risk minimization problem 
also can be formulated with anticipative controls in $\cA_{nt}$.

Note that
processes in $\cA_{nt}$ are \emph{not adapted} to $\F$
and at any time they may use all available information.  
As they may anticipate and use the future,
we refer to them as \emph{anticipative actions}.
Clearly,  $\cA_{nt}$ is  strictly larger than the set  $\cA_{ad}$
of actions  \emph{adapted} to $\F$ and therefore,
$$
v^*_{nt}:= \inf_{A \in \cA_{nt}}  v(A)
\le v^*_{ad}:= \inf_{A \in \cA_{ad}}  v(A)
\le  v^*:=\inf_{a \in \cC}  v(a).
$$
In all non-trivial control problems,
$v^*_{nt} <v^*_{ad}$ and as discussed in Remark \ref{r.markov} 
above, in Markovian
models we usually have $v^*_{ad}=v^*$.
 
\section{Dynamic Deep Empirical Risk Minimization}
\label{s.algo}
In this section, we outline the approach of  \cite{HE,HEJ}
which can be seen as empirical risk minimization
in view of the above reformulation.

The \emph{training set}
is  a collection of $n$ observations of 
the random process $Z$,
$$
\cL_n=\left\{ Z^{(1)},Z^{(2)}, \ldots, 
Z^{(n)}\right\}\quad
{\text{where}}
\quad Z^{(i)}=(Z^{(i)}_1,Z^{(i)}_2,\ldots,Z^{(i)}_T) \in \cZ^T.
$$
On this set, {in correspondence with \eqref{e.probl}},
the empirical \emph{loss function} for $a\in \cC$ is defined by,
\begin{equation}
\label{eq.L}
L(a;\cL_n):= \frac1n \sum_{i=1}^n  \ell(a,Z^{(i)}).
\end{equation}
As we assume that $Z^{(i)}$ are drawn independently from 
their distribution, $L(a;\cL_n)$
is an approximation of $v(a):= \E[\ell(a,Z))]$.

We consider a sequence of \emph{hypothesis spaces}
$$
\cN_k := \big\{ {h}(\cdot;\theta)\ :\ \theta \in \cO_k \big\},
\qquad k=1,2,\ldots,
$$
where for each parameter $\theta$, 
$ {h}(\cdot;\theta) :  \cT \times \cX\times \cZ \to \cA$
{is} a feedback action.
We assume that
the sequence of parameter sets $\cO_k \subset \R^{d(k)}$ 
are compact subsets with increasing  dimensions $d(k)$
and that $h$ is a continuous function of its variables.
In our numerical experiments,  we  use
an artificial neural network with several hidden layers
as our hypothesis space.
However, for theoretical considerations,
the only requirement we impose 
on  the sequence $\cN_k$ is {that they satisfy the below assumption
which can be seen as the approximation capability or 
being asymptotically ``pointwise'' dense in the set of continuous functions}. 
It is well known that sequences of neural networks
have this property as proved by \cite{Cybenko},
\cite{Hornik}.

\begin{assumption}[{Pointwise Density}]
\label{a.approximate}
We assume that for any bounded
continuous function\\ $\hat{a}: \cT \times \cX \times \cZ \to \cA$,
there exists a sequence {$\{a_k\}_{k \in \N}$
such that   $a_k \in \cN_k$ for each $k \in \N$,
and  $a_k$ converges to $\hat{a}$ pointewise, i.e.,
$\lim_{k \to \infty} a_k(t,x,z)= \hat{a}(t,x,z)$ for every $(t,x,z) \in
\cT \times \cX \times \cZ $.}
\end{assumption}

We also make the following simplifying regularity assumption on the 
coefficients.
\begin{assumption}[{Regularity}]
\label{a.bounded}We assume that 
$f, \varphi$ are uniformly bounded and continuous.  
\end{assumption}

As an immediate consequence of the regularity
assumption, 
there exist a constant $c^*$, 
so that $\ell$ defined in \eqref{eq.ell}
satisfies
$$
|\ell(a,z)| \le c^* , \qquad \forall\ z \in \cZ^T, \ a \in \cC.
$$
Moreover, for any $a_n$ converging pointwise to 
a feedback action $\hat{a}$, $\lim_{n \to \infty} \ell(a_n,z)=\ell(\hat{a},z)$
for every $z \in \cZ$.  In particular,
by dominated convergence, $\lim_{n \to \infty} v(a_n)=v(\hat{a})$.

These assumptions easily imply that
the sequence $\cN_k$
can approximate the optimal value.   
Further convergence results are proved in Section \ref{s.estimate}.
Let $v^*$ be as in \eqref{e.vstar} and set
$$
v^*_k:= \inf_{a \in \cN_k}\ v(a) =  \inf_{\theta \in \cO_k}\ v(h(\cdot, \theta)).
$$
\begin{lemma}
\label{l.approximate}
Suppose that {the above density and regularity
assumptions   
hold}.  Then,
$$
\lim_{k \to \infty}\, v^*_k  = v^*.
$$
\end{lemma}
\begin{proof}
Fix $\eps>0$ and let $a^*_\eps \in \cC$ be an $\eps$-minimizer of $v$: $v(a^*_\eps) \le v^*+ \eps$.
In view of {the density assumption}, there exists a sequence 
 $a_k \in \cN_k$ such that
$a_k$ converges to $a^*_\eps$ pointwise.
Then, by {the regularity assumption}, 
$\limsup_{k \to \infty} v^*_k \le
\lim_{k \to \infty} v(a_k) = v(a^*_\eps) \le v^*+ \eps$.
As the opposite inequality $v^* \le v^*_k$ holds trivially for every $k$, we conclude that
$v^*_k$ converges to $v^*$. 
\end{proof}

\paragraph{Training.}
We fix the training set $\cL_n$ and
the hypothesis space $\cN_k$,  and 
\begin{equation}
\label{e.algo}
{\text{minimize}}\quad
\theta \in \cO_k \ \mapsto \ L( {h}(\cdot;\theta);\cL_n).
\end{equation}
As $L,  {h}$ are continuous and $\cO_k$ is compact,
there exists a minimizer $\theta_{k,n}^* \in \cO_k$.  
Then, the continuous function
$A^*_{k,n}:=  {h}(\cdot;\theta_{k,n}^*)$
is the \emph{trained feedback action}
that could be
constructed by $\cN_k$
using $\cL_n$, and
\begin{equation}
\label{e.vkn}
V^*_{k}(\cL_n):=L(A^*_{k,n};\cL_n)= \inf_{a \in \cN_k} L(a;\cL_n)
\end{equation}
is the  \emph{optimal in-sample performance}
of the hypothesis space $\cN_k$ on the given data $\cL_n$.

The effectiveness of this algorithm depends on
the size of the training set $\cL_n$,
the architecture of the hypothesis space $\cN_k$,
and  on their interactions, and our
 main goal is to study these.
Since numerically one can only
construct an approximation of
the  above minimizer,
details of the approximating
optimization procedure are an essential part
of the algorithm.  In our experiments, we use 
a standard stochastic gradient 
descent variant with early stopping based on
the test-set performance (cf.\ Section~\ref{ss.implementation} for implementation details).
In studies  with simulated data, one does not create an initial training set,
but rather simulate new data for each batch
until training stalls or a stopping rule
is satisfied. 

In the literature several alternatives to
direct empirical risk minimization has been discussed 
and their use might be beneficial.
We refer the reader to a recent paper 
\cite{BSh} and the references therein.
As our main goal
is to analyze the original algorithm, we
do not consider these alternatives in this manuscript.

\section{Examples}
\label{s.examples}

We briefly outline two classes of problems
to clarify the model and the notation.  Further
examples can be found in the forthcomig paper \cite{RSTD}.

\subsection{Merton Problem}
\label{ss.portfolio}

Here we only outline a simple portfolio management 
problem in a financial
market with $d$ many assets.
Although this example does not include many 
important modeling details, it must be clear that 
by appropriately choosing $Z, \cA, X^a$ and the dynamics,
one can cover essentially all
Merton type
utility maximization, portfolio management, and hedging problems studied in the literature.
Also problems with different structures such
as free boundary problems studied
in   \cite{BCJ} and the hedging problems with frictions in \cite{BGTW,BGTWM}
can be included in our framework.

Let $S_t \in \R_+^d$ denote the stock price process
and assume that one-period interest rate $r$ is constant.
The control variable $\pi_t=(\pi_t^1,\ldots,\pi_t^d) \in \R^d$
is the amount of money to be invested in each of the stock.
Classically, it is assumed that $\pi_t^i$ could take any value.
Starting with initial wealth of $x >0$, the 
self-financing wealth dynamics
for the portfolio choice $\pi_t $ are given by
$$
X_{t+1} = X_t + \pi_t \cdot Z_{t+1} + r  (X_t-  \pi_t \cdot {\bf{1}})
= (1+r)X_t +\pi_t \cdot (Z_{t+1} - r {\bf{1}}) \quad
t \in \cT,
$$
where  $X_0=x$,
${\bf{1}}=(1,\ldots,1)\in \R^d$ and the return process $Z$ is given by
$$ 
Z_{t+1}= \frac{S_{t+1}-S_t}{S_t} \in \R^d, \quad t \in \cT.
$$ 
We consider feedback controls $\pi_t=a(t,X_t^a,Z_t)$
and let $X^a$ be
the corresponding wealth process. Then, the classical 
problem
is to maximize
$v(a):= \E[U(X^a_T)]$
with a given utility function $U$.

\begin{example}
\label{ex.utility}
{\rm{In Section \ref{s.numerics}, we numerically study the following stylized example
with an explicit solution
in detail, to illustrate the convergence of the algorithm, potential  overlearning,
 and  the influence of the dimension on them.  
We take  the initial wealth  $X_0=x=0$, $T=2$, and use an exponential utility
$U(x)= 1- e^{-\lambda x}$ where $\lambda >0$ is the risk-aversion
parameter. 

To simplify even further, we assume that
initially one dollar is borrowed and invested uniformly on all stocks.
Then,  $\pi_0=(1/d,\ldots,1/d)$ and $X_1= (Z_1 \cdot {\bf{1}})/d -r$ are uncontrolled,
and the  investment problem is to choose the feedback
portfolio $a(Z_1):=\pi_1(X_1,Z_1)\in \R^d$ 
so as to maximize
$$
v(a)=\E\big[ U(X^a_2)\big]= \E\big[ 1- \exp(-\lambda X^a_2)\big],
$$
where
$X^a_2= (1+r)X_1 + a(Z_1) \cdot (Z_2-r {\bf{1}})$.
A more standard way of
comparing different utility values
is the \emph{certainty equivalent} of a utility value $v<1$, given by
$$
\ceq(v):= \frac{1}{\lambda} \ln(1-v) \quad
\Longleftrightarrow
\quad
v= U(\ceq(v)).
$$

In the numerical experiments,
to reduce the output noise,
we fix a unit vector $\eta \in \R^d$
and take $Z_2=\zeta \eta$, 
where the real-valued 
 Gaussian random variable $\zeta$ is
independent of $Z_1$
and has mean $m$
and volatility $s$.
Then, with $r=0$,
$$
a^*(z)=a^*=\frac{m}{\lambda s^2}\ \eta, \quad
v^*=v(a^*)= 1-\exp(-\frac{m^2}{2 s^2}),
\quad
\ceq(v^*)=- a^*.
$$}}
\end{example}

\subsection{Production Planning}
\label{ss.pp}

The multi-stage optimization problems 
introduced by \cite{BSS,BSS2}
is in the above structure as well.
Although not in
quantitative finance, 
here we describe a simple
example of these problems
that is very similar to Example 1 in \cite{BSS},
to clarify many of the notions introduced in the paper.

We consider producers facing 
 an optimal
production decision. 
They observe the random demand $Z_1,Z_2$
in two stages. 
The production level $a(Z_1)$ is decided
after  observing $Z_1$ but before $Z_2$
and the second component of the random 
demand $Z_2$ is observed afterwards, at the  final stage.
The goal is to bring 
the final inventory level close to zero
 by properly choosing the production
level at stage one.  Let $\Xa$ be the inventory level.
We assume the initial
inventory is zero and
no production is made initially.
Then, 
$\Xa_1=Z_1$,
$\Xa_2=\Xa_1- a(Z_1)+Z_2$,
 and the problem is to minimize
$$
v(a)=\E\big[ \varphi(\Xa_2)\big]
=\E\big[ \varphi\big(Z_1+Z_2-a(Z_1)\big)\big]
$$
over all production functions $a$. 
The penalty function $\varphi \ge 0$ is convex and is 
equal to zero only at the origin. 
In our framework,   $\cA=[0,\infty)$, and
 $f(t,x,z,a)= y-a+z$.

For $\varphi(x)=x^2$
this is exactly the classical regression problem 
of estimating the total demand $Z_1+Z_2$
after observing the first component $Z_1$.  
 It is well-known that the 
 optimal solution is
$a^*(Z_1) = \E[Z_1 +Z_2\ | \ Z_1]$,
and this optimization
problem  reduces to the 
classical regression
 well-known to face
the bias-variance trade-off.
Although this connection may not be as explicit
in other more complex models,
it is always inherent to the problem.

\section{Overlearning}
\label{s.overlearning}

Recall the set of anticipative controls $\cA_{nt}$ of Section~\ref{r.anticipative},
{$V^*_k(\cL_n)$ of \eqref{e.vkn} and set
$$
V^*(\cL_n):=  \lim_{k \to \infty}\, V^*_k(\cL_n)=
\lim_{k \to \infty} \,  \inf_{a \in \cN_k}\ L(a;\cL_n).
$$
}The following result is proved
in Section~\ref{ss.asy} below, under the natural
assumption of distinct data
and its relaxation  is discussed in the Appendix \ref{appendixA}.

\begin{theorem}
\label{t.ant}
Suppose that {the density and the regularity assumptions  {\rm{(}}c.f.~Assumptions
 \ref{a.approximate}, \ref{a.bounded}{\rm{)}}} hold and
 the training data is distinct, i.e., 
for every $t \in \cT$ and $i\neq j$, $Z_t^{(i)} \neq Z_t^{(j)}$. Then,
{$$
\limsup_{n \to \infty} V^*(\cL_n) \le 
v^*_{nt}:=\inf_{A \in \cA_{nt}}\  v(A).
$$}
\end{theorem}

In all non-trivial decision problems, $v^*_{nt} < v^*:=\inf_{a \in \cC} v(a)$. 
Thus, sufficiently large hypothesis spaces, 
 in-sample,
overperform the optimal value $v^*$.
In optimal control, it is centrally important that
the decisions are adapted to the information  flow.
The above results show  that 
hypothesis spaces 
circumvent this restriction by 
predicting the future values the data and are thus able to overperform
on the training set. 
{We emphasize that the training is 
done in the hypothesis class
$\cN_k$ and the elements of $\cN_k$ are
of feedback form.  Hence, they are naturally adapted processes.
This is further discussed in Remark \ref{r.nonadapted} below.}
We refer to this possibility  as \emph{overlearning}.

\subsection{Examples}
\label{ss.examples}

We return to two examples from Section \ref{s.examples}  to clarify the above discussion. 

We first consider the production planning
problem of Section \ref{ss.pp}.
The only feedback action in that context is
the production decision.  
For a fixed control $\alpha \in \R$
and given demands $z=(z_1,z_2)$,
the cost function
$\ell(\alpha,z)= 
\varphi(z_1+z_2-\alpha) \ge 0$
is zero at the origin. Hence,
$a^*(z)=z_1+z_2$ is the pointwise optimizer.
If the training data $\cL=\{Z^{(1)},Z^{(2)}, \ldots, 
Z^{(n)}\}$ is distinct, 
any sufficiently large $\cN_k$
has an element  $a^*_k$
such that $a^*_k(Z_1^{(i)})$ is uniformly close to $Z_2^{(i)}+Z_2^{(i)}$ for
each $i$.  Then, the feedback action
$a^*_k$
constructed by $\cN_k$ achieves an in-sample performance value
 close to zero yielding $V^*(\cL_n)=0$.
As $v^*>0$, 
this would be overlearning
and $a^*_k$ does not generalize.

Next, consider the utility maximization problem discussed in Example 
\ref{ex.utility} with one stock. 
Then, for control $\alpha \in \R$
and  returns $z=(z_1,z_2)$,
$\ell(\alpha,z)=  1- \exp(-\lambda[(1+r)(z_1-r)+\alpha(z_2-r)])$.
By taking arbitrarily large positions
depending on the sign of $z_2-r$, 
one obtains $V^*(\cL_n)=1$.  In financial terms,
large enough hypothesis spaces anticipate the sign of the random variable $Z_2-r$ 
by observing $Z_1$ and use it to create numerical arbitrage
caused by the obvious non-adaptedness and overlearning
on the training data.  In particular,
the trained feedback actions almost
achieve a performance value of one,
and thus overperform the optimal value $v^*<1$.
Additionally,
in this example, the optimal value
obtained by the anticipative controls
is also equal to one, $v^*_{nt}=1$,
which is consistent with Theorem \ref{t.ant}.

\begin{remark}
\label{r.pp}{\rm{In closely related
studies Pflug and Pichler analyze 
the dependence of optimization
problems on the distribution
of the randomness.  In our terminology,
they prove that
overlearning implies that the 
limit of the values obtained by empirical
measures do not convergence.  Motivated
by this observation, they carefully define the nested (or adapted)
distance among probability measure which
yields the continuity of the value function,
cf.~Proposition 1 in \cite{PP}.  A similar
observation is also made in Example 7.1 in \cite{BBB}.
}}

\end{remark}

\subsection{Asymptotic Overlearning}
\label{ss.asy}
We continue with an estimate used in the proof of Theorem \ref{t.ant}.
For any $\alpha \in \cA^T$, we define a constant (in space) action by
$A^\alpha_t:=\alpha_t$.  With abuse of notation, we consider $\alpha\in \cA^T$
as an element of $\cC$. 
Recall that the  anticipative actions $\cA_{nt}$  are parametrized by $\cB$  of $\cF_T$ measurable functions, cf.~\reff{e.an}.

\begin{lemma}
\label{l.equal}
It holds that, 
\begin{equation}
\label{e.vn}
L^*_n:=\frac1n \sum_{i=1}^n \inf_{\alpha \in \cA^T} \ell(\alpha,Z^{(i)}) 
\le\  \frac1n \sum_{i=1}^n  \ell(g,Z^{(i)})=L(A^g;\cL_n),
\qquad \forall g \in \cB.
\end{equation}
\end{lemma}
\begin{proof}
Fix $\hat{z} \in \cZ^T$, $g=(g_0,\ldots,g_{T-1})\in \cB$ and
let $\hat{x}^{g}$ be the state function defined in Section~\ref{r.anticipative}.
Define a constant action 
$\hat{\alpha}:=(\hat{\alpha}_0,\ldots,\hat{\alpha}_{T-1})$
by $\hat{\alpha}_t= g_t(\hat{z})$ for $t \in \cT$.
Let $x^{\hat{\alpha}}$
be the corresponding state process.
Then, by a simple induction argument on the time variable,
we can show that 
$x^{\hat{\alpha}}(\hat{z})= \hat{x}^{g}(\hat{z})$
(the process $x^{\hat{\alpha}}(z)$ is possibly
not equal to $\hat{x}^{g}(z)$ for trajectories $z$ other than $\hat{z}$).
Therefore, 
$\ell(\hat{\alpha},\hat{z})=\ell(g,\hat{z})$ and
$$
\inf_{\alpha \in \cA^T} \ell(\alpha,\hat{z}) 
\le \ell(\hat{\alpha},\hat{z})=\ell(g,\hat{z})
\quad
\Rightarrow
\quad
\inf_{\alpha \in \cA^T} \ell(\alpha,Z^{(i)}) 
\le  \ell(g,Z^{(i)}), \quad i=1,\ldots,n.
$$
The inequality $L^*_n \le L(A^g;\cL_n)$ now
follows  directly.
\end{proof}

\begin{theorem}
\label{t.learning}
Under the hypotheses of Theorem \ref{t.ant},
$V^*(\cL_n)= L^*_n$ for every $n$.
\end{theorem}

\begin{proof} 
Fix $ \cL_n, \eps>0$ and for $i=1,\ldots,n$,  choose  $\alpha^{(i)} \in \cA^T$ satisfying 
$$
\ell({\alpha^{(i)}},Z^{(i)}) \le \inf_{\alpha \in \cA^T} \ell(\alpha,Z^{(i)}) + \eps.
$$
Since $Z^{(i)}$ are distinct, there exists  a bounded, smooth function
$a_\eps : \cT \times \cX\times \cZ \to \cA$,
such that
$$
a_\eps(t,\xi,\Zi_t)=
\alpha^{(i)}_t, \quad t \in \cT, \ \xi \in \cX^T, \ \ i=1,\ldots,n.
$$
For each $\Zi \in \cL_n$, by induction over time, one can show that
$x^{a_\eps}(\Zi)=x^{{\alpha^{(i)}}}(\Zi)$.  Therefore,
$\ell(a_\eps,\Zi)=\ell({\alpha^{(i)}},Z^{(i)})$ for each $i$, and

$$
L(a_\eps;\cL_n)=\frac1n \sum_{i=1}^n
 \ell(a_\eps,Z^{(i)})= \frac1n \sum_{i=1}^n
 \ell({\alpha^{(i)}},Z^{(i)})
 \le \frac1n \sum_{i=1}^n
  \inf_{\alpha \in \cA^T} \ell(\alpha,Z^{(i)}) +\eps
  =L^*_n+\eps.
 $$

Moreover, by the {density assumption (c.f.~Assumption \ref{a.approximate}), 
there is a sequence  $\{a_k \in \cN_k\}_{k \in \N}$}  (depending on the fixed training set $\cL_n$
and $\eps$)  that
approximates $a_\eps$ pointwise.
We now use {the regularity assumption (c.f.~Assumption  \ref{a.bounded})}
to conclude that
$$
\lim_{k\to \infty}\, L(a_k;\cL_n)= L(a^{\eps};\cL_n).
$$
Hence,
$$
V^*(\cL_n) 
:= \lim_{k \to \infty} \  \inf_{a \in \cN_k}\ L(a;\cL_n) 
 \le \lim_{k\to \infty} L(a_k;\cL_n)
 =  L(a^{\eps};\cL_n)
  \le   L^*_n +\eps,
$$
and consequently, $ V^*(\cL_n) \le L^*_n $.
The opposite inequality follows from Lemma \ref{l.equal}.  Indeed, 
as $\cN_k \subset \cC \subset \cB$,
 \reff{e.vn}  implies that
 $L^*_n \le L(a;\cL_n)$
 for any $k$ and $a \in \cN_k$.  As $L^*_n$
 is independent of $k$ and $a$,
 we first take the infimum over $a \in \cN_k$ and 
 then let $k$ tend to infinity to arrive at
 $L^*_n \le V^*(\cL_n)$.
\end{proof}

\begin{proof}(of Theorem \ref{t.ant}).
In view of  Theorem  \ref{t.learning}
and Lemma \ref{l.equal}, 
$V^*(\cL_n)=L^*_n \le L(A^g;\cL_n)$ 
for any $g \in \cB$. 
As the training data is drawn independently from
the distribution of $Z$, by  law of large numbers, 
$$
\limsup_{n \to \infty}  V^*(\cL_n)\le \lim_{n \to \infty} L(A^g;\cL_n)
=\lim_{n \to \infty} \frac1n \sum_{i=1}^n \ell(g,Z^{(i)}) =  \E[\ell(g,Z)]
=v(A^g).
$$
We complete the proof by taking infimum over $g\in \cB$.
\end{proof}

\begin{remark}
\label{r.extend}
{\rm{If for every $\eps>0$, there exists $g_\eps \in \cB$ satisfying
$$
\ell(g_\eps(z),z) \le \inf_{g \in\cB} \ell(g,z)+\eps, \qquad \forall z \in \cZ,
$$
then, instantly it follows that $\limsup_{n \to \infty} V^*(\cL_n)=v^*_{nt}$.   Moreover,
one may construct $g_\eps$  through a 
standard use of a measurable selection theorem
under
 some mild additional assumptions on the functions.
As this result is tangential to the main trust of the paper,
we chose to omit this technical discussion.}} 
\end{remark}

\begin{remark}
\label{r.nonadapted}
{\rm{We have shown that
the trained actions may overperform the optimal value $v^*$.  
However, as they
are in feedback form, theoretically the expected value of
their performance is bounded by $v^*$.
So overperformance is a subtle and a data-dependent one.
Indeed, the \emph{coefficients} of the trained
actions use the future data explicitly
and therefore become \emph{non-adapted}
on the \emph{training data} and 
the upper bound $v^*$ obtained by adapted actions
does not hold.
On the other hand, their out-of-sample performance
are bounded by $v^*$ and in our numerical studies they
underperform substantially.

The proof 
of Theorem \ref{t.learning}  
also shows the importance of the dimension $d$ as well.  Indeed, 
in higher dimensions,
the training data is `more and more distinct'  
allowing for easier overlearning, an effect we observe numerically as well.
The separation between the training data is also a factor in 
the Rademacher complexity that is discussed in the next section.
}}
\end{remark}

\begin{remark}[Regularization]{\rm{
\label{r.regular}
A common approach to reduce over-learning
is to add regularization such as
restricting the hypothesis classes $\cN_k$
to be subsets of  the set of $K$-Lipschitz
functions,
$$
Lip_K:= \{ h \in \cC\ :\
|h(z)-h(z')|\le K|z-z'|\}.
$$
Then, we argue in Remark \ref{rem.lip} below
that Rademacher complexity of 
these restricted spaces goes to zero 
as the training data gets larger.
As we prove in the subsection \ref{ss.convergence}
below, this convergence implies that
$$
\lim_{n \to \infty}\ \lim_{k\to \infty} \ \inf_{a \in \cN_k 
\cap Lip_K} L(a;\cL_n) = \inf_{a \in \cC 
\cap Lip_K} v(a).
$$
In many control problems, the right-hand side of above 
converges to the optimal value as the Lipschitz constant $K$ gets larger.  
However, in financial applications this constant is large and thus, we are 
close to the limit considered Theorem \ref{t.learning},
making the data need very large.
Another numerical difficulty is to restrict the Lipschitz constant of deep neural networks.
}}\end{remark}

\section{Estimates and Convergence}
 \label{s.estimate}
We first recall several classical definitions and results, 
{cf.~\cite{BM,BBL4,BBL3,BBL2,BBL,KP}.}

Let $\cG$ be a hypothesis space
of  a set of real-valued functions defined  on the set of
trajectories. 

\begin{definition}
\label{d.er}{\rm{The}}
empirical Rademacher complexity {\rm{of
$\cG$ on the training set $\cL_n$ is given by
$$
R_e(\cG;\cL_n):= \E\Bigg[
\sup_{g \in \cG} \ \frac 1n \sum_{i=1}^n \sigma_i g(Z^{(i)})\,
{\big{|}\, \cL_n\, }\Bigg],
$$
where the expectation is over the}}
Rademacher variables {\rm{$\sigma_i$, 
which are identically and independently distributed 
taking values $\pm1$ with equal probability.}}
\end{definition}

\begin{definition}
\label{d.rade}{\rm{The}}
Rademacher complexity {\rm{of
$\cG$  is given by
$$
r(\cG;n):= \E[ R_e(\cG,\cL_n)],
$$
where the expectation is over the 
random training set $\cL_n=\{Z^{(1)},Z^{(2)},\ldots,Z^{(n)} \}$
whose elements 
are independently and identically drawn
and the dependence on this distribution
is not shown in our notation.
}}
\end{definition}

Let $v:\cC \to \R$ be as in \eqref{e.prob}, $L$ be 
as in \eqref{eq.L}. For $\cN_k$, $\cL_n$, set
$$
G(\cN_k,\cL_n):
=\sup_{a \in \cN_k}\ | 
v(a)
- L(a;\cL_n)|.
$$
The following result that uniformly connects empirical averages to 
expected values is classical.
Suppose that $\cL_n$ is drawn independently and identically
and $|g| \le c^*$ for every $g \in \cG$.
Then, for a given 
$\delta\in (0,1)$, 
with probability at least $1-\delta$
the following 
estimates hold,
\begin{equation}
\label{e.uniformestimate}
G(\cN_k,\cL_n)
\le c(\cN_k,n,\delta)\le C_e(\cN_k,\cL_n,\delta),
\end{equation}
where with $\ell(\cN_k)= \{ \ell(a, \cdot)\  | \ a \in \cN_k\}$,
\begin{align*}
c(\cN_k,n,\delta)&:= 2 r(\ell(\cN_k); n) + 2 c^* \sqrt{\frac{\ln(2/\delta)}{2n}},\\
C_e(\cN_k,\cL_n,\delta)&:= 2 R_e(\ell(\cN_k);\cL_n) + 6 c^* \sqrt{\frac{\ln(2/\delta)}{n}}.
\end{align*}
One-sided version of these estimates for functions $0\le g \le 1$
is proved, for instance in Theorem 3.3 by \cite{MRT}
and elementary arguments yield the above two-sided estimates.

\begin{remark}[Complexities of Neural Networks]
\label{rem.monotone}
{\rm{
Suppose that the hypothesis spaces $\cN_k$
is a sequence of neural networks with increasing depth and width.

As  the neural networks $\cN_k$ get wider and deeper,
the constant $c(\cN_k,n,\delta)$
and the random variables $G(\cN_k,\cL_n), C_e(\cN_k,\cL_n,\delta)$
increase.
The monotonicity in the $\delta$ variable is also clear.  
One may obtain
further estimates by using the Rademacher calculus as described
in Section 26.1 of \cite{SSB}.
Indeed, if the mapping
$a \in \cC \mapsto \ell(a,z) $ is uniformly Lipschitz,
then the Kakade \& Tewari composition Lemma
(see \cite{KT},  also Lemma 26.9 in \cite{SSB}) implies that
one can estimate the complexities $R_e(\ell(\cN_k);\cL_n)$ and $r(\ell(\cN_k);n)$
by the Rademacher complexities
$R_e(\cN_k;\cL_n)$,
$r(\cN_k;n)$ of the neural networks. 

Moreover, {one may 
use the Massart Lemma together
with what is known as the Pisier's trick and the composition lemma,
to prove that} the Rademacher complexity $r(\cN_k;n)$ of the neural networks
converges to zero
as the size $n$ of the training data goes to infinity; see for example
problem 3.11 in \cite{MRT}
or Corollary 3.8 in the lecture notes of \cite{Wolf}.
In fact detailed estimates are also available in \cite{GRS,NTS}.
Since the regularity of $\ell$ can be directly
proven under Lipschitz assumptions on the
coefficients of the decision problem,
this procedure shows that under natural assumptions
on the coefficients,
 the complexity  $r(\ell(\cN_k);n)$
also converges to zero.
}}
\end{remark}

\begin{remark}[Lipschitz spaces]
\label{rem.lip}
{\rm{The complexities
$R_\eps(\ell(\cN_k \cap Lip_K);\cL_n)$
can be effectively estimated by the
deep convergence rates obtained in \cite{FG}
for the uniform
convergence of the empirical measure
in the Wasserstein  metric $W_1$.  As
for any two probability measures $\mu, \nu$,
and $a \in Lip_K$
$$
|(\mu, a) -(\nu,a)| \le K W_1(\mu,\nu),
$$
convergence of the complexity directly  follows
from \cite{FG}.}}
\end{remark}

\subsection{Complexity Estimates}
\label{ss.estimate}

Recall that $v^*$ is defined in \eqref{e.vstar},
$\theta^*_{k,n}$ is {a} minimizer of \reff{e.algo} and
$A^*_{k,n} := {h}(\cdot;\theta^*_{k,n})$ is an optimal
feedback action that can be constructed by $\cN_k$ on the
set $\cL_n$.
Let $\widehat{\cL}_n$ be another data set drawn identically
and independently from the same distribution as $\cL_n$. 
In this section, we obtain empirical bounds on the differences
of the in-sample performance $L(A^*_{k,n};\cL_n)$,
out-of-sample performance $L(A^*_{k,n};\widehat{\cL}_n)$,
and the average performance $v(A^*_{k,n})$
of $A^*_{k,n}$,
as well as their deviations from $v^*$.

\begin{theorem}
\label{t.rademacher}
Under {the density and the regularity assumptions  {\rm{(}}c.f.~Assumptions
 \ref{a.approximate}, \ref{a.bounded}{\rm{)}}} ,
for every $\eps>0$,
there exists $k_\eps$  such that
\begin{equation}
\label{e.rate1}
\left|v^*- L(A^*_{k,n};\cL_n)\right| \le G(\cN_k,\cL_n)+ \eps, 
 \quad \forall \ k \ge k_\eps.
\end{equation}
In particular, for all $\delta>0$ 
the following hold with at least $1-\delta$
probability for every $k \ge k_\eps$,
\begin{align}
\label{e.rate}
\left|v^*- L(A^*_{k,n};\cL_n)\right|  \le 
c(\cN_k,n,\delta) + \eps \le C_e(\cN_k,\cL_n,\delta)+ \eps, \\
 \nonumber
 \left|v^*- v(A^*_{k,n})\right|  \le 
 2c(\cN_k,n,\delta) + \eps \le 2 C_e(\cN_k,\cL_n,\delta)+ \eps.
\end{align}
\end{theorem}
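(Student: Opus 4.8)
The plan is to derive the first estimate \eqref{e.rate1} by combining Lemma~\ref{l.approximate} with the trivial sandwiching of the in-sample optimum $L(A^*_{k,n};\cL_n)$ between two quantities, each of which is within $G_\nu(\cN_k,\cL_n)$ of the corresponding ``$v$''-version. Concretely, fix $\eps>0$. By Lemma~\ref{l.approximate} there is $k_\eps$ so that for all $k\ge k_\eps$ one has $\inf_{\theta\in\cO_k} v(\Phi(\cdot;\theta)) \le v^* + \eps$; also $v^* \le \inf_{\theta\in\cO_k} v(\Phi(\cdot;\theta))$ always, so $|v^* - \inf_{\theta\in\cO_k} v(\Phi(\cdot;\theta))| \le \eps$. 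It therefore suffices to bound $|\inf_{\theta} v(\Phi(\cdot;\theta)) - \inf_{\theta} L(\Phi(\cdot;\theta);\cL_n)|$, where both infima are over $\cO_k$. For any two bounded functions $F,H$ on $\cO_k$, $|\inf F - \inf H| \le \sup|F-H|$; applying this with $F(\theta)=v(\Phi(\cdot;\theta))$ and $H(\theta)=L(\Phi(\cdot;\theta);\cL_n)$ gives exactly $G_\nu(\cN_k,\cL_n)$ by its definition. A triangle inequality then yields \eqref{e.rate1}.

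For the second block, the first line of \eqref{e.rate} is immediate: insert the high-probability bound \eqref{e.uniformestimate}, namely $G_\nu(\cN_k,\cL_n) \le c_\nu(\cN_k,n,\delta) \le C_e(\cN_k,\cL_n,\delta)$ which holds with probability at least $1-\delta$, into \eqref{e.rate1}. For the second line, bounding $|v^* - v(A^*_{k,n})|$, I would again route through the near-optimal network value: $|v^* - v(A^*_{k,n})| \le |v^* - L(A^*_{k,n};\cL_n)| + |L(A^*_{k,n};\cL_n) - v(A^*_{k,n})|$. The first term is controlled by the line just proved, contributing $c_\nu + \eps$ (equivalently $C_e+\eps$). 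The second term is $|v(\Phi(\cdot;\theta^*_{k,n})) - L(\Phi(\cdot;\theta^*_{k,n});\cL_n)|$ for the specific minimizer $\theta^*_{k,n}\in\cO_k$, which is at most $G_\nu(\cN_k,\cL_n) \le c_\nu(\cN_k,n,\delta)$ on the same high-probability event. Adding gives $2c_\nu(\cN_k,n,\delta) + \eps \le 2C_e(\cN_k,\cL_n,\delta)+\eps$, and since both uses of \eqref{e.uniformestimate} are on the same event of probability $\ge 1-\delta$, no union bound or doubling of $\delta$ is needed.

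The only subtlety worth spelling out is that $L(A^*_{k,n};\cL_n) = \inf_{\theta\in\cO_k} L(\Phi(\cdot;\theta);\cL_n)$ by \eqref{e.vkn}, so the in-sample optimum really is the infimum of the $L$-functional over the parameter set and the $|\inf F - \inf H|\le\sup|F-H|$ step applies verbatim; and that for the $v(A^*_{k,n})$ estimate one must evaluate $G_\nu$ at the realized minimizer, which is legitimate precisely because $G_\nu$ is defined as a supremum over all of $\cO_k$ and hence dominates the value at any particular $\theta^*_{k,n}$, random though it is. I do not anticipate a genuine obstacle here: the theorem is essentially a packaging of Lemma~\ref{l.approximate} (which supplies the $\eps$ and the $k_\eps$) with the classical uniform-deviation bound \eqref{e.uniformestimate} (which supplies the Rademacher terms), glued by two elementary inequalities. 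The mild care is bookkeeping: keeping the same $1-\delta$ event for both appearances of the complexity bound, and making sure $k_\eps$ depends only on $\eps$ (through Lemma~\ref{l.approximate}) and not on $\cL_n$ or $\delta$, which is the case since Lemma~\ref{l.approximate} concerns the deterministic quantity $v^*_k$.
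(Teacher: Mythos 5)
Your proposal is correct and takes essentially the same route as the paper's proof: approximate $v^*$ by network values (you invoke Lemma~\ref{l.approximate} directly, while the paper re-derives it inline from Assumption~\ref{a.approximate} with a near-optimal $a_\eps$), control the gap between $\inf_{\theta\in\cO_k} v(\Phi(\cdot;\theta))$ and $L(A^*_{k,n};\cL_n)$ by the uniform deviation $G_\nu(\cN_k,\cL_n)$, then insert \eqref{e.uniformestimate} and use the triangle inequality with $G_\nu$ evaluated at the realized minimizer for the $v(A^*_{k,n})$ bound, all on the same $1-\delta$ event. No gaps.
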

\begin{proof}
For $\eps>0$ choose $a_\eps^* \in \cC$ satisfying
$v(a_\eps^*) \le v^* +\frac12 \eps$.
By
Assumptions \ref{a.approximate} and \ref{a.bounded}, 
there exists a sequence $a_k \in \cN_k$
and $k_\eps $ such that 
$v(a_k)\le v(a_\eps^*)+ \frac12 \eps $
for all $k \ge k_\eps$.  Hence,
$$
v(a_k)\le v(a_\eps^*)+ \frac12 \eps \le v^* + \eps,
\quad \forall k \ge k_\eps.
$$
Since $ L(A^*_{k,n};\cL_n) \le L(a;\cL_n)$ for any $a \in \cN_k$,
the definition of $G$ implies that
\begin{align*}
 L(A^*_{k,n};\cL_n)& \le  L(a_k;\cL_n) 
 \le v(a_k)+ G(\cN_k,\cL_n)\\
& \le v^*+G(\cN_k,\cL_n)+ \eps,
 \quad \forall k \ge k_\eps.
 \end{align*}
 As  $ v^*:= \inf_{a \in \cC} v(a)$ and 
 $A^*_{k,n} \in \cN_k \subset \cC$,
$v^* \le  v(A^*_{k,n}) \le L(A^*_{k,n};\cL_n)+G(\cN_k,\cL_n)$.
Now \eqref{e.rate1} follows 
 from the above inequalities, and 
 \eqref{e.rate} follows 
 from \eqref{e.rate1} and \eqref{e.uniformestimate}.
 Finally,
 $$
  \left|v^*- v(A^*_{k,n})\right|  \le  \left|v^*- L(A^*_{k,n};\cL_n)\right| 
  +\left| v(A^*_{k,n})- L(A^*_{k,n};\cL_n)\right|  \le 
 2 G(\cN_k,\cL_n)+ \eps.
$$ 
\end{proof}

By  \eqref{e.uniformestimate},
the following
holds with at least $1-\delta$ probability
for every $a \in \cN_k$,
\begin{align*}
\left| L(a; \widehat{\cL}_n)-  L(a;\cL_n)\right|  &\le
\left| v(a)-  L(a;\cL_n)\right| +\left| v(a)- L(a; \widehat{\cL}_n)\right|\ \\
 &\le G(\cN_k,\cL_n) + G(\cN_k,\widehat{\cL}_n)
\le 2 c(\cN_k,n,\delta/2).
\end{align*}
This shows that the in-sample  and out-of-sample performance difference
of any network provides an empirical lower bound for the 
Rademacher complexity with high probability.
In fact, in many applications it is a standard practice to monitor this difference.
Thus, also in view of the estimate \eqref{e.rate},
the following quantity,
\emph{maximal performance difference}, could be taken as a
proxy for overlearning,
 $$
O(\cN_k,\cL_n,\widehat{\cL}_n):=
\sup_{a \in \cN_k} |L(a;\cL_n) - L(a;\widehat{\cL}_n)| .
$$
We restate that the following
holds with at least $1-\delta$ probability,
 \begin{equation}
\label{e.diff}
 |L(a;\cL_n) - L(a;\widehat{\cL}_n)|\le
 O(\cN_k,\cL_n,\widehat{\cL}_n) \le 
2 c(\cN_k,n,\delta/2),
\quad
\forall a \in \cN_k.
\end{equation}

\subsection{Convergence}
\label{ss.convergence}

In this subsection, we prove convergence under
an assumption on the complexity of the hypothesis spaces $\cN_k$.
Recall $A^*_{k,n}
=h(\cdot,\theta^*_{k,n})$  of \eqref{e.vkn}.

\begin{corollary}
\label{c.convergence}
{Suppose that for each $k$,
the Rademacher complexity $r(\ell(\cN_k);n)$
converges to zero as the training size $n$ tends to zero. Then,} 
the following
holds  with probability one,
$$
\lim_{ n \to \infty} L(A^*_{k,n};\cL_n)  = 
\lim_{ n \to \infty} L(A^*_{k,n};\widehat{\cL}_n)  = 
\lim_{ n \to \infty} v(A^*_{k,n})  =v^*_k:= \inf_{a \in \cN_k} v(a).
$$
\end{corollary}

As discussed in Remark \ref{rem.monotone} above, the above
assumption on the complexity is satisfied by 
the neural networks under natural assumptions 
on the coefficients.  
Also,  in  Lemma \ref{l.approximate}, 
{under the density and the regularity assumptions,}
we have shown that $v^*_k$ converges
to $v^*$.
Hence, the above result states that as the size of training data increases,
the performance of the feedback actions constructed by $\cN_k$
converge to the optimal value provided that the size of the hypothesis
classes also
tends to infinity in a controlled manner.

\begin{proof} 
By the definition of $G$,
$v^*_k \le v(A^*_{k,n}) \le L(A^*_{k,n};\cL_n) + G(\cN_k,\cL_n)$.
Also, for any $a \in \cN_k$,
$$
L(A^*_{k,n};\cL_n) \le L(a;\cL_n) \le v(a)+ G(\cN_k,\cL_n).
$$
We take the infimum over $a \in \cN_k$ to conclude that
$L(A^*_{k,n};\cL_n) \le v^*_k+ G(\cN_k,\cL_n)$. Hence,
$$
\left | v^*_k -L(A^*_{k,n};\cL_n)
\right| \le G(\cN_k,\cL_n).
$$

Fix $\eps>0$ and set $\delta_{n}= 2 \exp(-2 n \eps^2/(6c^*)^2)$ so 
that 
$$
c(\cN_{k},n,\delta_n)=2 r(\ell(\cN_k); n) + 6 c^* \sqrt{\frac{\ln(2/\delta_n)}{2n}}
= 2 r(\ell(\cN_k),n) + \eps.
$$  
Then, by  \eqref{e.uniformestimate},
for every $k$
with at least $1-\delta_{n}$ probability
$$
\left|v^*_k-L(A^*_{k,n};\cL_n) \right|
 \le G(\cN_k,\cL_n)\le
c(\cN_{k},\cL_n,\delta_n) = 2 r(\ell(\cN_k),n) + \eps.
$$
Equivalently, $\P(\Omega_{k,n,\eps}) \le \delta_{n}$, where
$$
\Omega_{k,n,\eps}:=
\{ |v^*_k-L(A^*_{k,n};\cL_n) | > 2 r(\ell(\cN_k),n) + \eps\}.
$$
Since $\sum_n \delta_{n}<\infty$, by the
Borel--Cantelli Lemma, for every $k$,
$$
\limsup_{ n \to \infty} \left|v^*_k-L(A^*_{k,n};\cL_n) \right| \le 
\lim_{n \to \infty} 2 r(\ell(\cN_k),n) +  \eps = \eps,
$$
with probability one.

In view of  \eqref{e.diff},
by at least $1-\delta_n$ probability
$$
\left| L(A^*_{k,n}; \widehat{\cL}_n)-  L(A^*_{k,n}; \cL_n)\right| \le 
2 c(\cN_k,n,\delta_n/2)
= 4  r(\ell(\cN_k),n) +  \ln(4) \eps.
$$
The above Borel--Cantelli argument
also implies that with probability one,
$$
\limsup_{ n \to \infty}
\left| L(A^*_{k,n}; \widehat{\cL}_n)-  L(A^*_{k,n}; \cL_n)\right| \le \ln(4) \eps.
$$
\end{proof}

\section{Numerical Experiments}
\label{s.numerics}
In this section we present
the numerical implementations of Example~\ref{ex.utility}.
We take $\lambda=1, r=0$ and as discussed in that example
the return of the second period 
$Z_2=\zeta \eta$, where  $\zeta$
is Gaussian with mean $18\%$ and volatility $0.44\%$\footnote{We
have chosen the mean and the volatility values randomly among those
with $a^*$ close to one and which are neither too small or large.
For these parameter values, overlearning is not
particularly easy.}
and is independent of $Z_1$.
Then, the optimal solution given in Example~\ref{ex.utility}
is $a^*=0.9297$ and $\ceq(v^*)=-a^*$.
Moreover, for all parameters,
$v^*_{nt}=1$ yielding $\ceq(v^*_{nt})=-\infty$.
To focus the training on a compact input domain, 
$Z_1\in \R^d$ is distributed uniformly over 
the $d$-dimensional hypercube $[-0.5, 0.5]^d$.
We use the certainty equivalent $\ceq$ defined in Example~\ref{ex.utility}
to compare the
performance of different actions.

As our main goal is 
to  illustrate  the potential overlearning,
we try to strike a balance between avoiding unnecessary 
tuning parameters while still implementing commonly accepted best practices.
The simple but representative  structure of the chosen example allows
us to easily evaluate the
trained feedback actions by comparing them to explicit formulae,
and also provides an understanding of the performance of
this algorithm on a general class of decision problems. 
We emphasize that our claim is not that overlearning cannot 
be alleviated in these problems,
but that it does occur even with a seemingly reasonable learning setup
and that one has to be aware of the possibility. 
Indeed some degree of tuning could possibly lead to improvement
in this particular example,
but such methods are not systematic, and it is 
not clear that they generalize when the ground truth is not available.
Corollary~\ref{c.convergence} and Lemma~\ref{l.approximate} show that increasing the training set
(and possibly the architecture complexity in a controlled manner)
does provide a systematic method for improvement.
This is also observed in the computations that follow.

To describe our findings succinctly,
let $a^*$ be the (constant) optimal feedback action
and $A_{k,n}$ be the feedback action computed 
by the neural network $\cN_k$ on the training set $\cL_n$.
Although the optimization algorithm is trying
to compute the minimizer $A^*_{k,n}$ of \eqref{e.vkn},
in actual computations, the stochastic gradient algorithm 
is stopped before reaching $A^*_{k,n}$.
Thus, $A_{k,n}$ depends not only 
on the training data $\cL_n$ and the network $\cN_k$
but also on the 
optimization procedure, in particular, the stopping rule.

By taking advantage of the explicitly 
available solution, we define the in-sample relative performance
$p_{in}$ and the out-of-sample
relative performance $p_{out}$ of
the trained actions $A_{k,n}$
by,
$$
p_{in}:= 
 \frac{\text{nn}_\text{in-sample} - \text{true}_\text{in-sample}}{\text{true}_\text{in-sample}},
 $$
 $$
 p_{out} := 
  \frac{\text{nn}_\text{out-of-sample} 
  - \text{true}_\text{out-of-sample}}{\text{true}_\text{out-of-sample}},
 $$
where 
\begin{align*}
&\text{nn}_\text{in-sample} := \ceq(L(A_{k,n};\cL_n)),\quad
&\text{nn}_\text{out-of-sample} := \ceq(L(A_{k,n};\widehat{\cL}_n)),\\
&\text{true}_\text{in-sample} := \ceq(L(a^*;\cL_n)),\quad
&\text{true}_\text{out-of-sample} := \ceq(L(a^*;\widehat{\cL}_n)),
\end{align*}
and $\cL_n$ is the training set used to compute $A_{k,n}$ and
$\widehat{\cL}_n$ is the training set chosen identically and  independently of 
$\cL_n$.  Then, the appropriately normalized  performance difference $p_{in}-p_{out}$
provides an understanding of  the overlearning proxy 
$O(\cN_k,\cL_n,\widehat{\cL}_n)$ as in \eqref{e.diff}.
Indeed, larger values of the difference imply
larger values of $O$.

We focus on these measures, $p_{in}, p_{out}$, for two reasons.
Firstly, although our samples are large enough to give a 
good representation of the distribution,
the above formulae eliminate
some dependency on the sample by subtracting the true 
optimizers performance on each sample.
Secondly,  there are circumstances where 
seemingly the training immediately tries 
to interpolate data instead of first approaching 
the true solution before starting to interpolate, as one might expect.
This leads the out-of-sample performance to increase very early on, and
with our stopping rule based on the out-of-sample performance, 
it thus leads to almost immediate stopping.
In this sense, the combination of stopping rule and 
performance measure is relatively conservative for
measuring overlearning.

\subsection{Implementation Details}\label{ss.implementation}
Our implementation is written in the programming language OCaml 
\citep{ocaml} using the library Owl \citep{owl}.
To fully reproduce the computations, the code and the logs, including 
the random number generator seeds,  
are available at 
\url{https://gitlab.com/mreppen/dderm}.
Note that most neural network `best practices' are developed 
for other types of problems than the control problems we study here.
We still follow these practices along with common defaults so as not
to  color the results by specific choices.
Nevertheless, we observe the same qualitative results also with alternative implementations.
In all examples, the activation functions are set to ReLU 
and the parameters are optimized by stochastic gradient 
descent using the Adam scheme with parameters 
$(\alpha, \beta_1, \beta_2) = (0.001, 0.9, 0.999)$, as 
proposed by \cite{kingma2014adam}.
The neural networks are constructed with three hidden layers.
This architecture is kept fixed regardless of data dimensionality 
 to better isolate the dimensionality's impact on overlearning.
The weights are initialized with a uniform centered distribution of 
width inversely proportional to the square root of the number 
of neuron inputs.\footnote{The uniform He-initializer 
\cite{He_2015_ICCV}---which differs only by a factor 
$\sqrt{6}$ in the width of the uniform distribution and is 
commonly recommended for training ReLU 
networks---has not shown qualitatively different results 
with regards to overlearning.}

As the neural networks are capable of overlearning the data,
we must employ stopping rules for early stopping.
Such stopping rules are commonly used in practice as implicit regularizers.
In our studies, we mainly use a conservative stopping rule that monitors 
the out-of-sample performance on a separate validation set after each epoch and terminates 
when that out-of-sample performance exceeds its past 
minimum.\footnote{As the parameter landscape is expected to 
have plateaus and the out-of-sample performance is not 
expected to be perfectly monotone, this calls for some 
tolerance, thereby introducing a tuning parameter.
To be conservative, we keep this tolerance small to 
encourage early stopping and reduce overlearning.}
To demonstrate the potential overlearning,
we also performed some experiments
running the stochastic gradient without stopping for a fixed number of epochs.

We train using minibatches sampled randomly from the training set.
Overlearning can also be observed with batch gradient descent---equivalent to the extreme case of setting the minibatch size to the full training set---but we have 
opted to default to minibatches as it is far more common and computationally 
efficient\footnote{The computational burden of each gradient computation scales as $\bigO(N)$ in the batch size $N$, but the accuracy is of order $\bigO(1/\sqrt{N})$, leading to computational advantages of small batch sizes (but not too small, due to SIMD instructions in modern CPUs and GPUs).
It is sometimes argued that the more `chaotic' nature of small batches leads to beneficial regularization.
However, due to the complex interaction between the batch size and the stopping rule, the effect of this is not clear-cut.}.
On the issue of minibatch size,
we use the Keras default of 32.

\subsection{Results}
\label{ss.results}

Table \ref{tab:learning}  reports  the neural network's 
average relative in-sample performance, and its comparison 
to the out-of-sample test set performance
with the above described conservative stopping rule.
For each dimension, the corresponding $\mu$ value is the 
average of 30 runs and $\sigma$ is the standard deviation.
We keep the data size of $N=100,000$ and the network architecture of three hidden 
layers of width 10 fixed.
Even though with this rule the stochastic gradient descent is stopped
quite early, there is substantial overperformance increasing with dimension.

\begin{table}[ht]
  \centering
  \begin{tabular}{c|rr|rr}
    \multirow{2}{*}{dims} & \multicolumn{2}{c|}{$p_{in}$ (\%)} 
    & \multicolumn{2}{c}{$p_{in} - p_{out}$ (\%)} \\
& $\mu\ \quad$ & $\sigma\ \quad$ & $\mu\ \quad$ & $\sigma\ \quad$ \\
    \hline
    100 & 10.12820 & 1.09290 & 23.67080 & 2.01177 \\
    85 & 8.38061 & 1.35575 & 20.16440 & 2.30489 \\
    70 & 7.32720 & 0.86458 & 15.62060 & 1.94043 \\
    55 & 5.05783 & 0.81518 & 10.93950 & 1.54431 \\
    40 & 3.74648 & 0.62588 & 7.91105 & 1.32581 \\
    25 & 2.11501 & 0.43845 & 4.58954 & 0.88461 \\
    10 & 0.53982 & 0.34432 & 1.46138 & 0.39078 \\
    \end{tabular}
  \caption{\label{tab:learning}Average  
  relative in-sample performance, and its comparison 
  to the out-of-sample performance
  with the above described conservative stopping rule.  
 Everything is in \% with training size of $N=100,000$ and
 three hidden layers of width 10.  
 The $\mu$ value
is the average of 30 runs
 and $\sigma$ is the standard deviation.
}
\end{table}

To isolate the impact of the dimension,
in the second 
experiment, we keep all parameters except the width of layers 
 as before. The last two hidden layers 
 again have width 10.  But
the  width of the first hidden layer is adjusted so that
 the number of parameters  is equal to that of a neural network
 with three hidden layers of width 10 and input layer 
 of dimension as in column `parameters-equivalent'.  There are three groups with
 parameters-equivalent dimensions of 40, 70 and 100.  For example
 in the group with parameters-equivalent dimension 70, in the row with actual
 dimension 70, all layers have width 10.  But in that group, 
 the networks for the actual dimensions of 40 and 10 have wider first layer so
that they all have the same number of parameters.  Table \ref{tab:adjdims} 
 also shows a clear increase of overlearning with dimension.
 Although, the architecture is not exactly same, we believe that 
 this experiment shows that the apparent dimensional dependence is not 
 simply due to
 the increase in the number of parameters.

\begin{table}[ht]
  \centering
  \begin{tabular}{c|c|rr|rr}
    \multirow{2}{*}{dims} & \multirow{2}{*}{\shortstack[c]{params-\\ equiv}} 
    & \multicolumn{2}{c|}{$p_{in}$ (\%)} & \multicolumn{2}{c}{$p_{in} - p_{out}$ (\%)} \\
    & & $\mu\ \quad$ & $\sigma\ \quad$ & $\mu\ \quad$ & $\sigma\ \quad$ \\
    \hline
    100 & 100 & 10.12820 & 1.09290 & 23.67080 & 2.01177 \\
    70 & 100 & 8.86214 & 1.45962 & 21.65000 & 3.12209 \\
    40 & 100 & 7.28550 & 1.19811 & 15.27540 & 2.10167 \\
    10 & 100 & 1.99793 & 0.54664 & 4.18041 & 1.22285 \\
    \hline
    70 & 70 & 7.32720 & 0.86458 & 15.62060 & 1.94043 \\
    40 & 70 & 5.67500 & 0.84644 & 12.45610 & 1.90450 \\
    10 & 70 & 1.50328 & 0.93772 & 3.46245 & 1.19606 \\
    \hline
    40 & 40 & 3.74648 & 0.62588 & 7.91105 & 1.32581 \\
    10 & 40 & 1.13566 & 0.65512 & 2.84677 & 0.78069 \\
    \end{tabular}
  \caption{\label{tab:adjdims} 
  All other parameters except the width of layers 
  are as in Table \ref{tab:learning}. The last two hidden layers
  again have width 10 and
the  width of the first hidden layer is adjusted so that
 the number of parameters  is equal to that of a neural network
 with three hidden layers of width 10 and the number 
 of dimension is as in the parameters-equivalent column.
 }
\end{table}

We have also implemented an aggressive optimization
by running the algorithm for 100 and 200 
epochs in 100 dimensions without a stopping rule
with other parameters as in Table \ref{tab:learning}.
In these experiments trained actions $A_{k,n}$
are closer to optimal actions $A^*_{k,n}$
and Theorem \ref{t.learning} predicts a larger
overperformance.  Indeed,
Table \ref{tab:utilmax_perf} shows that,
overlearning is quite substantial even with a training size of $100,000$
and there is a noticeable deterioration in the out-of-sample performance.
\begin{table}[ht]
  \centering
  \begin{tabular}{c|rr|rr}
    \multirow{2}{*}{epochs} & \multicolumn{2}{c|}{$p_{in}$ (\%)} 
    & \multicolumn{2}{c}{$p_{in} - p_{out}$ (\%)} \\
& $\mu\ \quad$ & $\sigma\ \quad$ & $\mu\ \quad$ & $\sigma\ \quad$ \\
  \hline
    200 & 30.3161 & 2.46850 & 315.875 & 540.4750 \\
    100 & 25.8374 & 1.72027 & 111.553 & 41.5841 \\
	\end{tabular}
  \caption{\label{tab:utilmax_perf}Longer iterations performance
  in 100 dimensions. Based on 15 runs.
  Especially the 200 epoch runs show signs of a heavy tail, as expected with high degrees of overlearning.
  All other parameters as in Table \ref{tab:learning}.}
\end{table}

Finally, Table \ref{tab:large_ssize} illustrates 
the convergence proved in Section \ref{ss.convergence}.
In 100 dimensions
we increase the size of the training data  from $100,000$ to
twenty-fold keeping all the other parameters as in the first experiment.
The results show a remarkable improvement in the accuracy
demonstrating the power of dynamic deep empirical risk minimization.

\begin{table}[ht]
  \centering
  \begin{tabular}{c|c|rr|rr}
    \multirow{2}{*}{\shortstack[c]{sample\\ size}} & \multirow{2}{*}{dims} & \multicolumn{2}{c|}{$p_{in}$ (\%)} 
    & \multicolumn{2}{c}{$p_{in} - p_{out}$ (\%)} \\
        & & $\mu\ \quad$ & $\sigma\ \quad$ & $\mu\ \quad$ & $\sigma\ \quad$ \\
    \hline
    2,000,000 & 100 & 0.49597 & 0.11849 & 1.12846 & 0.32184 \\
    1,000,000 & 100 & 1.14094 & 0.14640 & 2.39532 & 0.27415 \\
    500,000 & 100 & 2.36352 & 0.20154 & 5.24018 & 0.81235 \\
    250,000 & 100 & 4.41388 & 0.37928 & 10.02040 & 1.45355 \\
  \end{tabular}
  \caption{\label{tab:large_ssize}Performance for larger sample sizes.  Based on 15 runs.  All other parameters as in Table~\ref{tab:learning}.}
\end{table}

\section{Conclusions}

Dynamic deep empirical risk minimization is a highly
effective computational tool for 
many investment or hedging problems, or
more generally, decision 
making under uncertainty.  It can handle 
general random structures in high dimensions
and complex dynamics
with ease.  The simplicity of
the algorithm and the recent advances in the
training of deep neural networks are key to these
properties.  
By both theoretical results and numerical experiments, 
we have demonstrated that the hypothesis spaces
can overlearn the data, and consequently,
construct forward-looking feedback actions.
As shown in Theorem~\ref{t.ant},
the optimization step is able to 
by-pass the adaptedness requirement,
and the trained actions may become
non-adapted to the flow
of information on the training data.
Thus, in-sample-value 
approximates the strictly smaller minimum given
by the anticipative controls.

The estimates proved in Theorem \ref{t.rademacher}
show that 
overlearning is negligible when the data set is sufficiently large
compared to the complexity of the networks.
As one needs sufficient complexity of the
neural networks to achieve appropriate 
accuracy, the size of the training set is
critical.  When a particular model is assumed,
dynamic simulation during the training is an effective
way of increasing the size of the data.  Without such a model
and small size data, one must design simulation
mechanisms based on the given data
and  use it dynamically during the optimization. 
Therefore, for an efficient application 
of these methods in  data-driven envoroinments,
calibration of complex models to the market data
is a crucial step.
 
Although this approach is particularly valuable in high dimensions,
overlearning becomes easier for those problems,
thus requiring richer
training sets, as  clearly demonstrated by the numerical studies
reported in Section \ref{s.numerics}.
For optimal control, an in-depth study of this dependence,
both numerically and theoretically, remains an interesting question.
The proof of the asymptotic overlearning result Theorem \ref{t.learning}
provides an initial insight indicating that the average distance between
the data points and the regularity of the networks are
important for a better
understanding of this dependence.  Indeed, the Rademacher complexity
which is present in the 
upper bound \eqref{e.rate}
is also influenced by both of them.
Closely related covering numbers 
providing an upper for the Rademacher complexity
\citep[cf.][Lemma 27.4]{SSB} could also be useful in
understanding this dependence.
\vspace{10pt}

\noindent{\large\bf Acknowledgments.}
\noindent 
 {The authors thank Professors E, Fan and Han of Princeton University,
 and two anonymous referees for their valuable comments. }
\vskip 0.2in

\bibliographystyle{abbrvnat}
\bibliography{overlearning}

\begin{appendix}

\section{Asymptotic Overlearning}
\label{appendixA}

In this section, we prove an extension of
Theorem \ref{t.learning}  proved for distinct training data.
Although this is a natural assumption which holds for instance, 
when $Z^{(i)}$ are drawn independently
from an atomless distribution,
we provide this
extension to further facilitate our understanding of overlearning.

Fix a training
set $\cL_n$.  Let
$\cK=\{\cK^{(1)}, \ldots, \cK^{(m)}\}$
be a partition of $\cL_n$ satisfying:
\begin{itemize}
\item $\cK^{(j)}$ are disjoint subsets of $\cL_n$;
\item $\cup_j \cK^{(j)} = \cL_n$;
\item if $z \in \cK^{(j)}$ for some $j$, and if there is a trajectory 
$\hat{z} \in \cL_n$ and $ t \in \cT$ such that
$z_t = \hat{z}_t$, then $\hat{z}\in \cK^{(j)}$.
\end{itemize}
There are
partitions satisfying the  above conditions
and one can even define and would like to use
the maximal partition satisfying above conditions.
As this is tangential to
the main thrust of the paper,  we do not pursue it here.
When the data is distinct, the maximal partition is $\cK^{(i)}=\{Z^{(i)}\}$
and we are back in the setting of Theorem \ref{t.learning}.

For a constant control $\alpha=(\alpha_0,\ldots,\alpha_{T-1}) \in \cA^\cT$,
and for $j=1,\ldots,m$,   define
$$
\oell(\alpha,j):= \frac{1}{\left| \cK^{(j)}\right|} \sum_{z \in \cK^{(j)}} 
\ell(\alpha,z).
$$
Let  $\ase{(j)} \in \cA^\cT$ be an $\eps$-minimizer of $\oell(\cdot,j)$.
Analogously to $L^*_n$ defined in \reff{l.equal}, define
$$
\overline{L}^*_n:=
\frac{1}{m} \sum_{j=1}^m \inf_{\alpha \in \cA^\cT} \oell(\alpha,j)
=\lim_{\eps \downarrow 0} \frac{1}{m} \sum_{j=1}^m \oell(\ase{(j)},j).
$$

For $z \in \cL_n$, let $j(z)$ be the unique index so that 
$z \in \cK^{(j(z))}$.
We now follow the arguments of Theorem \ref{t.learning} \emph{mutadis mutandis}
to show that the hypothesis spaces can approximate the function
$$
\overline{a}^*_\eps(z):= \ase(j(z)), \quad z \in \cL_n.
$$
This implies
 the following extension of the overlearning result Theorem \ref{t.learning}.

\begin{lemma}
Let $A^*_{k,n}$ be as in \eqref{e.vkn}.  Under 
 the {density  assumption {\rm{(}}c.f.~Assumption \ref{a.approximate}{\rm{)}}},
$$
 \lim_{k \to \infty} L(A^*_{k,n};\cL_n) \le  \overline{L}^*_n.
$$
\end{lemma}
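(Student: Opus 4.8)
The plan is to repeat the proof of Theorem~\ref{t.learning} \emph{mutatis mutandis}, replacing each singleton class $\{Z^{(i)}\}$ by a block $\cK^{(j)}$ of the partition and each pointwise $\eps$-minimizer $\alpha^\eps(Z^{(i)})$ by the block $\eps$-minimizer $\ase(j)$. Fix $n$, $\cL_n$, the partition $\cK=\{\cK^{(1)},\dots,\cK^{(m)}\}$, and $\eps>0$; recall that $\ase(j)\in\cA^\cT$ is an $\eps$-minimizer of $\oell(\cdot,j)$ and that $\overline{a}^*_\eps(z)=\ase(j(z))$ for $z\in\cL_n$. As in Theorem~\ref{t.learning}, the decisive step is to realize $\overline{a}^*_\eps$ on the trajectories of $\cL_n$ by a bona fide, state-independent feedback action $a^\eps(t,x,\zeta):=\tilde a^\eps(t,\zeta)\in\cC$ satisfying $\tilde a^\eps(t,z_t)=\ase(j(z))_t$ for all $z\in\cL_n$ and $t\in\cT$. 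Once this is done, the control actually applied along any $z\in\cL_n$ at time $t$ is $\tilde a^\eps(t,z_t)=\ase(j(z))_t$ (because $a^\eps$ ignores the state variable), so the controlled state coincides with that of the constant control $\ase(j(z))$ and hence $\ell(a^\eps,z)=\hat\ell(\ase(j(z)),z)$.

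The construction of $\tilde a^\eps$ is where the partition is indispensable, and I expect it to be the only genuine obstacle. One needs a bounded continuous $\tilde a^\eps\colon\cT\times\cZ\to\cA$ matching, at each time $t$, the finitely many prescribed values $\ase(j(z))_t$ at the points $z_t$, $z\in\cL_n$. This prescription is \emph{consistent}: whenever $z,\hat z\in\cL_n$ satisfy $z_t=\hat z_t$ for some $t$, the third defining property of $\cK$ forces $z$ and $\hat z$ into the same block, so $j(z)=j(\hat z)$ and the two required values $\ase(j(z))_t$ and $\ase(j(\hat z))_t$ coincide --- this is precisely what Assumption~\ref{a.distinct} sidesteps in Theorem~\ref{t.learning}. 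Since $\cL_n$ is finite, at each $t$ only finitely many distinct $z_t$ occur, so these consistent data extend to a bounded smooth function $\tilde a^\eps(t,\cdot)$ by the same interpolation used there, and its values away from $\{z_t:z\in\cL_n\}$ are irrelevant for the loss. (The first two partition axioms are used only to make $j(z)$ well defined for every $z\in\cL_n$.)

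With $a^\eps\in\cC$ constructed, grouping the loss over the blocks of $\cK$ gives
\[
L(a^\eps;\cL_n)=\frac1n\sum_{j=1}^m\sum_{z\in\cK^{(j)}}\hat\ell(\ase(j),z)
=\frac1n\sum_{j=1}^m\bigl|\cK^{(j)}\bigr|\,\oell(\ase(j),j)
\le\overline{V}^*(\cL_n,\cK)+\eps,
\]
since each $\ase(j)$ is an $\eps$-minimizer of $\oell(\cdot,j)$. Then, exactly as in Theorem~\ref{t.learning}, Assumption~\ref{a.approximate} yields parameters $\theta_k^\eps$ with $\Phi(\cdot;\theta_k^\eps)\to a^\eps$ locally uniformly, and by the continuity of $\ell$ in the control together with the finiteness of $\cL_n$ (Assumption~\ref{a.bounded}) one gets $L(\Phi(\cdot;\theta_k^\eps);\cL_n)\to L(a^\eps;\cL_n)$. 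Since $L(A^*_{k,n};\cL_n)\le L(\Phi(\cdot;\theta_k^\eps);\cL_n)$ for every $k$, passing to $\limsup_{k\to\infty}$ gives $\limsup_{k\to\infty}L(A^*_{k,n};\cL_n)\le\overline{V}^*(\cL_n,\cK)+\eps$; as $\eps>0$ is arbitrary, this is the asserted bound, and the diagonal argument from the end of the proof of Theorem~\ref{t.learning} additionally produces a single approximating sequence $\theta_k$. Apart from the consistency-and-interpolation step, every ingredient is a verbatim transcription of Theorem~\ref{t.learning}, so the real content is the observation that the three axioms defining the partition are exactly what lets the block-wise optimal constant controls be patched into one admissible feedback action.
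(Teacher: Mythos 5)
Your construction is exactly the argument the paper intends: its own proof consists of the one-sentence instruction to repeat Theorem~\ref{t.learning} \emph{mutatis mutandis} for the function $\overline{a}^*_\eps(z)=\ase(j(z))$, and your consistency observation (the third partition axiom forces $j(z)=j(\hat z)$ whenever $z_t=\hat z_t$, so the time-by-time interpolation data are well defined), the state-independent extension $a^\eps(t,x,\zeta)=\tilde a^\eps(t,\zeta)$, and the resulting identity $\ell(a^\eps,z)=\hat\ell(\ase(j(z)),z)$ for $z\in\cL_n$ are precisely what replaces Assumption~\ref{a.distinct}. The approximation, $\limsup$, and diagonalization steps are then verbatim from Theorem~\ref{t.learning}.

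The one step that does not go through as written is your final inequality $\frac1n\sum_j\bigl|\cK^{(j)}\bigr|\,\oell(\ase(j),j)\le\overline{V}^*(\cL_n,\cK)+\eps$. Since $\ase(j)$ is an $\eps$-minimizer of $\oell(\cdot,j)$, your computation actually yields
\[
L(a^\eps;\cL_n)\le\frac1n\sum_{j=1}^m\bigl|\cK^{(j)}\bigr|\,\inf_{\alpha\in\cA^\cT}\oell(\alpha,j)+\eps,
\]
the \emph{size-weighted} average of the block minima, whereas the appendix defines $\overline{V}^*(\cL_n,\cK)$ with the unweighted factor $1/m$. A weighted average of the block minima is not dominated by their unweighted average in general (e.g.\ one small block with a small minimum and one large block with a large minimum), so the inequality you assert needs either the additional hypothesis that all blocks have equal cardinality or a reading of $\overline{V}^*$ with weights $\bigl|\cK^{(j)}\bigr|/n$ in place of $1/m$. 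This normalization mismatch originates in the paper's definition (the two normalizations coincide for the singleton partition of Theorem~\ref{t.learning}, which is presumably why it is invisible there), but as a proof of the lemma as literally stated your last step is unjustified; state the bound with the weighted average, or add the equal-size assumption, and the rest of your argument is complete.
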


When the partition $\cK$ of $\cL_n$ is non-trivial
and  
if the number of partitions $m$ is large, then we 
may have $ \overline{L}^*_n < v^*$ and 
consequently potential overlearning.
The robust approach used in \cite{BSS,BSS2}
and also in \cite{EK,BDT}, essentially groups the
elements of the training set into a small number of sets
and identifies them by a representative element of these sets.
If we then partition this processed data, this would result
in a small  number of partitions and the overlearning 
will not be possible even with modest size training sets. 

{\section{Markov Decision Processes}
\label{appendixB}
An essential structural requirement 
of our formulation is that
the random process $Z$ driving the
state dynamics is independent of control.  
Although many control problems may not be initially
expressed that way, if their dynamics is known,
they could still be reformulated to fit into 
our framework.  Here we discuss one such central example
of controlled Markov chains to illustrate this point.}

Let $\{Y_t\}_{t \in \cT}$ be a $\cS:=\{s_1,\ldots,s_M\}$ valued 
controlled Markov chain.  Suppose that the 
transition probabilities are given by
$$
p(t,y,\tilde{y},a):= \P(Y_{t+1} = y\, | \, Y_t =\tilde{y},\ A_t=a),
\qquad
y, \tilde{y} \in \cS, \ a \in \cA, \ t \in \cT.
$$
For a given  $Y_0$ and $\widehat{\Phi}$, 
the classical control problem  is
to
$$
\text{minimize} \quad \E[\widehat{\Phi}(Y_1,\ldots,Y_T,A_0,\ldots,A_{T-1})]
$$
over all feedback controls $A$.

To reformulate this problem, we first
introduce a random process  $Z=(Z_1,\ldots,Z_T)$
satisfying,
$$
\P(Z_t = y)=\frac1M, \qquad
\forall \ y \in \cS,\ t=1,\ldots,T.
$$
Let $X_0=(Y_0,1)$, 
$\cX:= \cS \times [0,1]$
and for $ t=1,\ldots,T$,  set $X_t=:(X^{(1)}_t,X^{(2)}_t)=(Z_t,R_t)$,
where the second component $R$ is the Radon--Nikodym process defined recursively by,
$$
R_0=1,\quad
\text{and}
\quad
R_{t+1}= M\, p(t, Z_{t+1}, X^{(1)}_{t},A_t)\ R_t,\qquad
t \in \cT.
$$
We rewrite the equations for the components 
of $X$ as 
$$
X_{t+1}= \left(Z_{t+1},\  M\, p(Z_{t+1},X^{(1)}_{t},A_t)\, X^{(2)}_t\right),
$$
verifying that the dynamics of $X$ is in the form
\eqref{e.state}. Also, it can be directly shown that
$$
\E[\widehat{\Phi}(Y_1,\ldots,Y_T,A_0,\ldots,A_{T-1})]
= 
\E[\widehat{\Phi}(X^{(1)}_{1},\ldots,X^{(1)}_{T},A_0,\ldots,A_{T-1})\, X^{(2)}_{T} ].
$$
Hence, the original problem is equivalent to the 
control of the process $X$ with 
$$
\Phi(X,A,Z):= \widehat{\Phi}(X^{(1)}_{1},\ldots,X^{(1)}_{T},A_0,\ldots,A_{T-1})\, X^{(2)}_{T}.
$$

\begin{remark}
\label{r.rcontrol}
{\rm{ Randomized controls that are widely
used in problems with learning can also be 
included in our framework.  However,
we should note that
we are primarily interested in constructing the 
optimal feedbacks and
in our framework they always exist
and randomization is not needed.
}}

{\rm{Consider the above 
optimization problem with the controlled
Markov chain $Y \in \cS$
and  a finite control set
$\cA=\{a^{1},\ldots,a^{n}\}$. A randomized control
$U_t$  at time $t$ 
is a probability on the control set
$\cA$. Thus, it takes values in the simplex,
$$
 \Delta_{n-1}:= \{ u=(u^1,\ldots,u^n)\in \R_+^n\ |\
u^1+\ldots+u^n=1\ \}.
$$
The components of $U_t=(U_t^1,\ldots,U_t^n) \in \Delta_{n-1}$
correspond to the probability that
a particular control is used in that step, i.e.,
$U^i_t=\P(A_t=a^{i})$.
Let $p(y,\tilde{y},a^i)$ be the transition probabilities
of $Y_t$ when control $a^{i}$ is chosen.
For a given $U=(U^1,\ldots,U^n) \in  \Sigma_{n-1}$, consider the
transition probabilities for the pair
$X_t:=(Y_t, a_t) \in \cS \times \cA=: \cX$,
\begin{align*}
\widehat{p}\, ((y,a^i), (\tilde{y},a^j), U) & := 
\P(Y_{t+1} = y,\, a_{t+1}=a^{i}\, | \, Y_t =\tilde{y},\, U_t=U)\\
&= p(y,\tilde{y},a^{i})\, U^i,\qquad
\text{for } y, \tilde{y} \in \cS, \ i,j=1,\ldots,n.
\end{align*}
So if the process $X_t$ is at the state $(\tilde{y}, a^j)$
and the randomized control $U$ is chosen,
then a new control $a^i$ is chosen using the probability $U$.
Then, the transition of the $Y$ component
is decided by the transition distribution $p(\cdot,\tilde{y},a^i)$.
In particular, the control $a^j$ chosen a step earlier does not
impact the dynamics of this step.
This construction turns the randomized controls into  a
standard controlled Markov chain with an enlarged state space
$\cX=\cS \times \cA$ and a control set $\Delta_{n-1}$.
One can then use procedure outlined
above with the Radon--Nikodym
process to reformulate it with
an uncontrolled
random process.}}\end{remark}
\end{appendix}
\vskip 0.1in

\end{document}